\title{
Missing Data in Sparse Transition Matrix Estimation for Sub-Gaussian Vector Autoregressive Processes%
\thanks{Accepted to the 2018 American Control Conference.}%
}
\author{Amin Jalali%
\thanks{Optimization Theme, Wisconsin Institute for Discovery, {\tt\small amin.jalali@wisc.edu}}%
~~and Rebecca Willett%
\thanks{Department of Electrical and Computer Engineering at the University of Wisconsin--Madison, and Wisconsin Institute for Discovery, {\tt\small willett@discovery.wisc.edu}}%
}
\date{}
\titleformat*{\section}{\Large\bfseries}
\titleformat*{\subsection}{\large\bfseries}
\titleformat*{\subsubsection}{\large\bfseries}
\titleformat*{\subparagraph}{\large\bfseries}
\newtheorem{theorem}{Theorem}
\newtheorem{lemma}[theorem]{Lemma}
\newtheorem{proposition}[theorem]{Proposition}
\newtheorem{definition}[theorem]{Definition}
\renewcommand{\hat}{\widehat}
\newcommand{\overbar}[1]{\mkern 1mu\overline{\mkern-1mu#1\mkern-1mu}\mkern 1mu}
\renewcommand{\bar}[1]{\overbar{#1}}
\newcommand{\op}[1]{{\operatorname{#1}}}
\newcommand{\norm}[1]{\|#1\|}
\newcommand{\nor}[1]{\|\cdot\|}
\newcommand{\abs}[1]{|#1|}
\newcommand{\matnorm}[1]{\vert\kern-0.25ex\vert\kern-0.25ex\vert #1 
    \vert\kern-0.25ex\vert\kern-0.25ex\vert}    
\newcommand{\sysnorm}[1]{\vert\kern-0.25ex\vert\kern-0.25ex\vert\kern-0.25ex\vert #1 
    \vert\kern-0.25ex\vert\kern-0.25ex\vert\kern-0.25ex\vert_{\infty}}
\newcommand{\argmin}{\mathop{\operatorname{argmin}}}
\newcommand{\E}{\mathbb{E}}
\renewcommand{\cal}[1]{\mathcal{#1}}
\newcommand{\bXc}{\bar{\mathcal{X}}}
\newcommand{\bDc}{\bar{\mathcal{D}}}
\newcommand{\Xc}{\mathcal{X}}
\newcommand{\bYc}{\bar{\mathcal{Y}}}
\newcommand{\Yc}{\mathcal{Y}}
\newcommand{\bw}{\bar{w}}
\newcommand{\Bst}{B_0}
\newcommand{\bst}{\beta_0}
\newcommand{\Bhat}{\hat B}
\newcommand{\T}{n}
\newcommand{\e}{\mathfrak{e}}
\newcommand{\Quad}{Q}
\newcommand{\Lin}{L}
\newcommand{\s}{s}
\renewcommand{\P}{P}
\newcommand{\tltwo}{\vartheta_2}
\newcommand{\tlone}{\vartheta_1}
\newcommand{\tlz}{\vartheta_0}
\newcommand{\VAR}{\textup{VAR}}
\newcommand{\ful}{\textup{full}}
\newcommand{\low}{\textup{low}}
\begin{document}
\maketitle
\thispagestyle{empty}

\begin{abstract}
High-dimensional time series data exist in numerous areas such as finance, genomics, healthcare, and neuroscience. An unavoidable aspect of all such datasets is missing data, and dealing with this issue has been an important focus in statistics, control, and machine learning. In this work, we consider a high-dimensional estimation problem where a dynamical system, governed by a stable vector autoregressive model, is randomly and only partially observed at each time point. Our task amounts to estimating the transition matrix, which is assumed to be sparse. In such a scenario, where covariates are highly interdependent and partially missing, new theoretical challenges arise. While transition matrix estimation in vector autoregressive models has been studied previously, the missing data scenario requires separate efforts. Moreover, while transition matrix estimation can be studied from a high-dimensional sparse linear regression perspective, the covariates are highly dependent and existing results on regularized estimation with missing data from i.i.d.~covariates are not applicable. At the heart of our analysis lies 1) a novel concentration result when the innovation noise satisfies the convex concentration property, as well as 2) a new quantity for characterizing the interactions of the time-varying observation process with the underlying dynamical system. 
\end{abstract}

\section{Introduction}\label{sec:intro}

Consider a $p$-dimensional covariance-stationary vector autoregressive model of lag one, namely a $\VAR(1)$, as
\begin{equation}\label{eqn:VAR1}
	w_{t+1} = \Bst w_t + \epsilon_t \,,~~ t=0,\ldots, \T-1,
\end{equation}	
where $\Bst\in \mathbb{R}^{p\times p}$ is the corresponding {\em transition matrix}, and each $\epsilon_t$ is a $p$-dimensional vector of {\em innovations}, with zero mean and covariance $\Sigma_\epsilon$, that is temporally uncorrelated with other noise vectors. The goal is to estimate $\Bst$ from {\em partial observations} of entries of $w_0,\ldots, w_\T$, given prior knowledge on $\Bst$ being sparse. 
Concatenating all the vectors in \eqref{eqn:VAR1} as columns of matrices yields
\begin{align*}
	\underbrace{\begin{bmatrix} w_1  \;\cdots\;  w_\T \end{bmatrix}}_{\mathcal{Y}}
	= \Bst \underbrace{\begin{bmatrix} w_0 \;\cdots\; w_{\T-1} \end{bmatrix}}_{\Xc}
	+ \underbrace{\begin{bmatrix} \epsilon_0 \;\cdots\; \epsilon_{\T-1} \end{bmatrix}}_{\mathcal{E}} 
\end{align*}
where each of the brackets represent a $(p\times \T)$-dimensional matrix. 
The available information from \eqref{eqn:VAR1} is in the form of entries in $W = \begin{bmatrix}	w_0 \;\cdots \; w_\T\end{bmatrix}$ that are missing according to i.i.d.~Bernoulli random variables with probability $0 \leq \delta <1$. Consider a new process $\{\bw_t\}$ where, for any $i=1,\ldots,p$,
\begin{align}\label{eqn:obs-process}
	(\bw_t)_i = \begin{cases}
 	(w_t)_i & \text{with probability}~ 1-\delta \\
 	0 & \text{with probability}~ \delta  ,
 \end{cases}
\end{align}
and observation is independent for different $i=1,\ldots,p$ and different $t=1,\ldots,\T$. 
We use the bar notation for other objects constructed from $\bw_0,\ldots,\bw_\T$. For example, $\bar{W} = \begin{bmatrix}	\bw_0 \;\cdots \; \bw_\T\end{bmatrix}$, $\bXc = \begin{bmatrix}\bw_0 \; \cdots \; \bw_{\T-1}\end{bmatrix}$, and so forth. For simplicity, we consider a centered process, hence $w_0=0$.

We handle the missing data by {\em modifying the LASSO~\cite{Tibshirani96LASSO} using population information on the observation pattern,} as described in the Appendix. 
More specifically, similar to \cite{MR3015038}, we solve either of the two following constrained versions of this program: either 
\begin{equation}\label{eq:multivar-estimator-L1} 
\argmin_{ \norm{B}_1 \leq b_0 \sqrt{k} } 
	\frac{1}{\T} \norm{B \bXc - \bYc}_F^2 
	- \frac{\delta}{\T} \norm{B\bDc}_F^2
	+ (1-\delta)^2 \lambda_\T \norm{B}_1 
\end{equation}
where 
$\bDc = (\op{diag}(\bXc \bXc '))^{1/2} \in{\mathbb{R}^{p\times p}}$ is a diagonal matrix of sample autocovariances for each of the $p$ covariates, 
$k$ is the number of nonzero entries of $\Bst$, i.e., 
$k = \norm{\Bst}_0$, 
$b_0$ is any value at least equal to $\norm{\Bst}_F$, 
and $\lambda_\T$ is the regularization parameter that will be chosen according to the parameters of the problem, 
or
\begin{equation}\label{eq:multivar-estimator}
\argmin_{ \norm{B}_1 \leq \norm{\Bst}_1 } ~ 
	\frac{1}{\T} \norm{B \bXc - \bYc}_F^2 
	- \frac{\delta}{\T} \norm{B\bDc}_F^2 \,.
\end{equation}
Note that, with a possibly non-convex quadratic optimization program, we need a suitably constrained feasible set to avoid an unbounded optimization problem and hope for recovering the target model, hence the constraints in \eqref{eq:multivar-estimator-L1} and \eqref{eq:multivar-estimator}.

Through employing a well-known machinery for the analysis of LASSO (summarized as Theorem~\ref{thm:master-multivar} in the Appendix), and by developing new concentration results for the random processes of interest in this work (Proposition~\ref{prop:conc-quad-bXc}), we provide guarantees on the $\ell_1$- and $\ell_2$-norm estimation errors for \eqref{eq:multivar-estimator-L1} and \eqref{eq:multivar-estimator} in Theorem~\ref{thm:main}. Before stating the main result, we discuss the prior art and how our setup leads to new challenges. We then discuss certain characteristics of the processes in \eqref{eqn:VAR1} and \eqref{eqn:obs-process} that are used in our guarantees, in Section~\ref{sec:salient}. More specifically, we introduce a new quantity, namely $\tltwo(\Bst)$ in~\eqref{def:tltwo}, which is used in characterizing the interplay between the dependence among covariates and the difficulty of recovery from partial information. We elaborate on these characteristics in Section~\ref{sec:more-quant}. Section~\ref{sec:lasso} provides a sketch of the proof for providing error bounds on LASSO and its variants, and is similar to many other works on LASSO in the literature. Section~\ref{sec:conc} contains our main contribution on establishing the required concentration inequalities for providing error bounds for LASSO through concentration of sub-Gaussian quadratic forms.

\subsection{Prior Art and New Challenges}
Estimators~\eqref{eq:multivar-estimator-L1} and \eqref{eq:multivar-estimator} can be seen as modifications of the LASSO \cite{Tibshirani96LASSO}, with constraints that help remedy the possible non-convexity of the estimator. Similar estimators have been considered in the literature for several sparse regression tasks and a similar framework has been used to analyze such estimators; e.g., in \cite{MR3015038,Bickel2009RE,RWY10Restricted,MR3357870}, and many more. However, the distinguishing aspect of different works in this area is the difference in the data generation processes and the required concentration analysis. In a simple data generation scheme as $y = X\bst + \epsilon$ where $\epsilon\sim\mathcal{N}(0,I)$ and $X$ has i.i.d.~random entries drawn from $\mathcal{N}(0,1)$ independently from~$\epsilon$, the analysis of LASSO, 
\begin{align}\label{lasso}
\argmin_{ \beta } ~ 
	\frac{1}{\T} \norm{X\beta -y }_2^2 
	+ \lambda \norm{\beta}_1 	
\end{align}
boils down to understanding the spectrum of the random matrix $X$. A more complicated case of correlated Gaussian designs is considered in \cite{RWY10Restricted}. More involved data generation scenarios require more involved probabilistic arguments to establish the conditions that guarantee (near-)optimality of $\bst$ for \eqref{lasso}. {For example, \cite{MR3357870} extends the above to transition matrix estimation in {\em Gaussian} vector autoregressive models. Authors in \cite{MR3015038} extend \eqref{lasso} to the case of non-convex quadratic optimization programs when $X$ has one of several interesting dependency patterns. }

We note that our focus is different from \cite[Corollary~4]{MR3015038} which considers sparse regression with missing data when the design matrix is generated by an autoregressive process with {\em known} transition matrix $A$ satisfying $\matnorm{A}_2<1$. In our case, the interactions among covariates depend on the {\em unknown} transition matrix, making the problem even more challenging. 

Assuming Gaussian innovations, $\epsilon_0,\ldots,\epsilon_{\T-1}$, makes $\{w_t\}$ of \eqref{eqn:VAR1} a Gaussian process. However, the partially observed process $\{\bw_t\}$ will no longer be a Gaussian process. Nonetheless, $\{\bw_t\}$ belongs to the family of sub-Gaussian processes for which many properties are known. In this work, we consider a subset of sub-Gaussian processes for the innovations: those with the {\em convex concentration property} in Definition~\ref{def:CCP}.

\subsection{Matrix Quantities}
\label{sec:main-quants}

We now review some important quantities associated to the transition matrix of interest, $\Bst$. Consider $\Bst$ as the adjacency matrix of a weighted directed graph on $p$ nodes as in the left panel of Figure~\ref{fig1}. It is intuitive that not only the number of edges in such graph but also the configuration of edges, e.g., the degree distribution, plays an important role in any inverse problem for identifying such graph. In~the following, we present important quantities associated with $\Bst$ that help in reflecting nuances of the corresponding graph. From a dynamical systems point of view, we review the notions of the spectral radius and the spectral norm, as well as new quantities presented in Section~\ref{sec:salient}, which allow us to capture the stability of the autoregressive process as well as the information content of our observations from this system.

   \begin{figure}[t]
      \centering
      {\parbox{3in}{
\tikzstyle{every node}=[circle, draw, fill=black!30,
                        inner sep=0pt, minimum width=6pt]
\centering
\begin{tikzpicture}[thick,scale=0.6]%
\foreach \x [count=\xi] in {18,90,...,306} {
	\node (n\xi) at (\x:2) {{\tiny\xi}};
}
\foreach \L/\R in { 1/2, 1/3, 1/5, 2/1, 2/5, 4/2, 5/4}{
	\draw[->, >=latex] (n\L)  -- (n\R);
}
\draw (n3) to [out=95,in=145,looseness=10] (n3);
\draw (n1) to [out=23,in=73,looseness=10] (n1);

\end{tikzpicture}\qquad\qquad
\begin{tikzpicture}[thick,scale=0.6]
\foreach \x in {1,...,5}
    {
        	\node (L\x) at (-1,5.4-\x*0.9) {{\tiny\x}};
        	\node (R\x) at (1,5.4-\x*0.9) {{\tiny\x}};
    }
\foreach \L/\R in { 1/2, 1/3, 1/5, 2/1, 2/5, 4/2, 5/4, 3/3, 1/1}{
	\draw[->, >=latex] (L\L)  -- (R\R);
}
\end{tikzpicture}
}}
      \caption{Left panel shows the directed influence graph corresponding to the support of $\Bst$ where an edge goes from node $i$ to node $j$ if $(\Bst)_{ij}\neq 0$. The right panel illustrates the corresponding evolution map of the autoregressive process in \eqref{eqn:VAR1} over one time step. }
      \label{fig1}
   \end{figure}
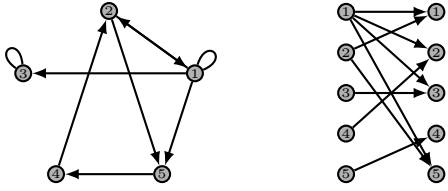

Denote by $\mathbb{C}^{p\times p}$ the space of all $p$ by $p$ complex matrices. The {\em spectral radius of $B\in \mathbb{C}^{p\times p}$} is the non-negative real number 
\begin{align*}
\rho(B) =\max \{\abs{\lambda}:~ \lambda\in\sigma(B)\} 
\end{align*}
where $|\cdot|$ denotes the magnitude and $\sigma(B)$ is the set of all eigenvalues of $B$. A norm on $\mathbb{C}^{p\times p}$ is called a {\em matrix norm} if it satisfies the {\em submultiplicative property} as $\matnorm{AB}\leq \matnorm{A}\matnorm{B}$ for all $A,B\in \mathbb{C}^{p\times p}$. If $\matnorm{\cdot}$ is any matrix norm, then $\rho(B) \leq \matnorm{B}$. 
For any $\varrho,\iota\geq 1$, consider the corresponding $\ell_\varrho$ and $\ell_\iota$ vector norms. The corresponding induced operator norm is then defined as 
\[
\matnorm{B}_{\iota\to \varrho}\coloneqq \sup_{x\neq 0}\frac{\norm{Bx}_\varrho}{\norm{x}_\iota}. 
\]
For example, $\matnorm{B}_{1\to 2}$ is the maximum $\ell_2$ norm among the columns of $B$, and $\matnorm{B}_{2\to \infty}$ is the maximum $\ell_2$ norm among the rows of $B$. When $\iota=\varrho$, we simply denote $\matnorm{B}_\iota\coloneqq \matnorm{B}_{\iota\to \iota}$. For example, $\matnorm{B}_1 = \max_{1\leq j \leq p}\sum_{i=1}^p\abs{B_{ij}}$ and $\matnorm{B}_\infty = \max_{1\leq i \leq p}\sum_{j=1}^p\abs{B_{ij}}$. When $\iota=2$, $\matnorm{B}_2 = \sqrt{\Lambda_{\max}(B'B)}$ is the spectral norm which is also simply referred to as the operator norm. Notice that $\matnorm{B}_2 \neq \rho(B)$ in general. The extension of vector $\ell_\iota$~norms to matrices is denoted by $\norm{B}_\iota \coloneqq \norm{\op{vec}(B)}_\iota$. Finally, we abuse notation to denote by $\norm{B}_0$ the number of nonzero elements in the vector or matrix input. The interested reader is referred to Section~5.6~of \cite{HornJohnson90} for a more comprehensive account of these matrix quantities.

Going back to the directed graph interpretation of $B$, we can view $\matnorm{B}_{1\to 2}$ as the maximum energy that any one node can exert on other nodes which is related to the maximum out-degree of nodes, while $\matnorm{B}_{2\to \infty}$ can be viewed as the maximum energy that is being exerted upon each node and is related to the maximum in-degree among nodes.

\subsection{Salient Characteristics of the Model}
\label{sec:salient}
The autoregressive process in \eqref{eqn:VAR1} is called stable if and only if $\rho(\Bst)<1$: all eigenvalues of $\Bst$ have modulus less than one. This is equivalent to
\[
\det(I-\Bst z)\neq 0 ~~\text{for all}~~ |z|\leq1 \,,
\]
where $z$ is a complex-valued scalar variable. In such case, we define three main quantities 
\begin{align}
\tlz(B) &\coloneqq \max_{\abs{z}=1} ~ \matnorm{I-Bz}_2
\label{def:tlz}
\\
\tlone(B) &\coloneqq \max_{\abs{z}=1} ~ \matnorm{(I - Bz)^{-1}}_2
\label{def:tlone}
\\
\tltwo (B) &\coloneqq \max_{\abs{z}=1} ~ \matnorm{(I - Bz)^{-1}}_{1\to 2}
\label{def:tltwo} .
\end{align}
The first two quantities are related to the least and the largest singular values of the transfer function on the unit circle. In other words, for all $z$ with $\abs{z}=1$, they quantify the least and the largest values of $\norm{(I-Bz)^{-1}u}_2$ when $\norm{u}_2 = 1$. The third quantity, on the other hand, characterizes the largest value of $\norm{(I-Bz)^{-1}u}_2$ when $\norm{u}_1 = 1$, for all $z$ with $\abs{z}=1$. 
Moreover, for $k = \norm{B}_0$, we have (see Section~\ref{sec:dim-indep})
\begin{align}
\tltwo(B) \leq \tlone (B) \leq \sqrt{2k} \, \tltwo (B) .
\end{align}
While \eqref{def:tlz} and \eqref{def:tlone} have been considered in other works on autoregressive models (e.g., see $\mu_{\max}$ and $\mu_{\min}$ in Equation (2.6) of \cite{MR3357870}), the definition of \eqref{def:tltwo} in the context of autoregressive model estimation is, to the best of our knowledge, new and motivated by the missing data setup.

\section{Main Results}\label{sec:main-res}
In this section, we state our main result followed by a discussion on the main quantities, a sketch of the proof, and a list of ingredients for this proof that we establish in the subsequent sections. In essence, we would show that the error scales with
\begin{align}\label{eq:theta-def}
\theta_0 \coloneqq  \frac{\tltwo(\Bst)^2}{\tlone(\Bst)^2} \in [\frac{1}{2k}, 1] .
\end{align}
We define a few more quantities to simplify the presentation of our main result. First, consider {\em an innovation condition number} defined as 
\begin{align*}
\kappa_\epsilon &\coloneqq 36 \sqrt{c_ac_\epsilon^2\matnorm{\Sigma_\epsilon}_2} \matnorm{\Sigma_\epsilon^{-1}}_2 
\end{align*}
where $c_a$ is a global constant and $c_\epsilon$ is a function of the innovation process $\epsilon$ and will be defined later. We also consider a {\em a transition condition number} defined as
\begin{align}\label{eq:kappa0-def}
\kappa_0 &\coloneqq \tlz(\Bst)^2 \tlone(\Bst)^2 = 
\frac
{\max_{\abs{z}=1}  \sigma_{\max}^2(I-\Bst z)}
{\min_{\abs{z}=1}  \sigma_{\min}^2(I-\Bst z)} .	
\end{align}
Finally, the nonzero pattern of $\Bst$ and the quality of our choice for $b_0$ can be measured through the followings: 
\begin{align*}
h \coloneqq 	\frac{b_0}{7 (\matnorm{\Bst}_{2\to\infty}^2 + 1) \sqrt{k}} ~~,~~
\zeta \coloneqq  \frac{1 + \delta\theta_0  k}{hk} .
\end{align*}
Note that 
$\norm{\Bst}_F \leq \sqrt{k} \matnorm{\Bst}_{2\to\infty} \leq \sqrt{k} \matnorm{\Bst}_{2\to\infty}^2/\rho(\Bst)$, which helps in understanding $h$ through 
\begin{align*}
	\frac{\norm{\Bst}_F}{(\matnorm{\Bst}_{2\to\infty}^2 + 1) \sqrt{k}} 
	&\leq \min\{ \frac{1}{\rho(\Bst)} , \matnorm{\Bst}_{2\to\infty} \} ,
\end{align*}
as $b_0$ is chosen to be at least $\norm{\Bst}_F$.

\begin{theorem}[main result]\label{thm:main}
Consider the $p$-dimensional autoregressive process in \eqref{eqn:VAR1} satisfying $\rho(\Bst)<1$. Suppose $\norm{\Bst}_0 = k$ and the innovations are temporally uncorrelated with zero mean and a positive definite covariance matrix $\Sigma_\epsilon$, and satisfy the convex concentration property (Definition~\ref{def:CCP}) with constant $c_\epsilon$. Suppose we have partially observed the process, with missing probability $\delta$, for time length $\T$ satisfying 
\begin{align*}
\sqrt{ \frac{\T}{\log p} } 
\geq ~ \frac{\kappa_\epsilon \kappa_0 \zeta}{(1-\delta)^2} \frac{1}{(\frac{1}{27} \zeta  - \delta\theta_0  )^2 }
\end{align*}
while $\zeta > 27 \delta \theta_0$. 
Define 
\[
\Phi \coloneqq 
\frac{c_0 b_0\sqrt{k}}{7} \frac{\kappa_\epsilon \kappa_0 \zeta}{(1-\delta)^2}
		\sqrt{\frac{\log p}{\T}} 	.
\]
Consider any $b_0 \geq \norm{\Bst}_F$ and any $\lambda_\T$ satisfying $\lambda_\T\geq \frac{2\Phi}{\varphi_0}$ where 
\begin{align}
\varphi_0 &\coloneqq c_0\tlz(\Bst)^2
	\matnorm{\Sigma_\epsilon^{-1}}_2
	\label{eq:mainthm-varphi0}
\end{align}
and $c_0$ and $c_1$ are universal constants. Then, with probability at least $1-10 p^{-1}$, for any optimal $\Bhat$ in \eqref{eq:multivar-estimator-L1} we have 
\begin{align*}
	\|\Bhat - \Bst\|_F ~\leq~ 2 \sqrt{k} \varphi_0
	\lambda_\T
	~,~~~
	\|\Bhat - \Bst\|_1 ~\leq~ 16 k \varphi_0
	\lambda_\T
\end{align*}
and, for $\tilde B \coloneqq \big[\Bhat_{ij} \mathbf{1}_{ |\Bhat_{ij}|>\lambda_\T } \big]_{i,j=1,\ldots,p}$ we have
\begin{align*}
\abs{\, \op{supp}(\tilde B) \setminus \op{supp}(\Bst) \,} 
~\leq~  112 k \varphi_0 .
\end{align*}
The same bounds, where $\lambda_\T$ is replaced by $\frac{2\Phi}{\varphi_0}$, apply to \eqref{eq:multivar-estimator}.
\end{theorem}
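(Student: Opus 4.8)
\section*{Proof proposal}

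The plan is to realize both estimators as instances of the standard regularized $M$-estimation framework for (possibly nonconvex) quadratic losses, and then to verify the two deterministic hypotheses of the master theorem (Theorem~\ref{thm:master-multivar})---a restricted-eigenvalue lower bound and a deviation bound on the gradient at $\Bst$---using the concentration machinery of Section~\ref{sec:conc}. Expanding the objective in \eqref{eq:multivar-estimator-L1}, the part depending on $B$ equals, up to an additive constant, $\langle B, B\hat\Gamma\rangle - 2\langle B,\hat\gamma\rangle$ acting row-wise, with $\hat\Gamma = \frac1\T(\bXc\bXc' - \delta\bDc^2)$ and $\hat\gamma = \frac1\T\bYc\bXc'$. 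The diagonal correction $\delta\bDc^2$ and the factors $(1-\delta),(1-\delta)^2$ in \eqref{eq:multivar-estimator-L1} are precisely what make $\hat\Gamma$ an unbiased surrogate for $(1-\delta)^2\Gamma$ and $\hat\gamma$ one for $(1-\delta)^2\Bst\Gamma = \Bst\,\E[\hat\Gamma]$, where $\Gamma \coloneqq \E[w_tw_t']$ (the Loh--Wainwright plug-in recalled in the Appendix), so that the gradient $\Bst\hat\Gamma - \hat\gamma$ at the truth is centered. Once (\LRE) and (\DB) are in force, the master theorem produces the stated $\ell_2$, $\ell_1$, and support bounds mechanically.

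For the deviation bound I would write $\hat\gamma - \Bst\hat\Gamma = \frac1\T\big[(\bYc - \Bst\bXc)\bXc' + \delta\Bst\bDc^2\big]$ and substitute $\Yc = \Bst\Xc + \mathcal{E}$ together with the masking rule \eqref{eqn:obs-process}. After centering, this splits into a genuine noise--design cross term pairing a linear function of the masked covariates with the innovations $\mathcal{E}$, plus terms that track the bias introduced because masking does not commute with $\Bst$; both are mean zero and both are covered by the bilinear/quadratic concentration result of Proposition~\ref{prop:conc-quad-bXc}. A union bound over the $p^2$ entries supplies the $\log p$ factor, and bookkeeping of the operator-norm constants---$\matnorm{\Sigma_\epsilon}_2$, $\matnorm{\Sigma_\epsilon^{-1}}_2$, $\tlz(\Bst)^2$, and the convex-concentration constant $c_\epsilon$---collapses to exactly $\|\hat\gamma - \Bst\hat\Gamma\|_\infty \le \Phi$ with probability at least $1-O(p^{-1})$, which is why the penalty level is required to obey $\lambda_\T \ge 2\Phi/\varphi_0$. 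This step is relatively routine once Proposition~\ref{prop:conc-quad-bXc} is available.

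The heart of the argument, and the step I expect to be the main obstacle, is the restricted-eigenvalue lower bound: for every $v$ in the feasible cone one needs $v'\hat\Gamma v \ge \alpha\|v\|_2^2 - \tau\|v\|_1^2$ with curvature $\alpha$ of order $(1-\delta)^2/(\tlz(\Bst)^2\matnorm{\Sigma_\epsilon^{-1}}_2)$, matching the appearance of $\varphi_0$ in \eqref{eq:mainthm-varphi0}, and tolerance $\tau = O(\sqrt{\log p/\T})$. The difficulty is that the masked process $\{\bw_t\}$ is simultaneously non-Gaussian (masking destroys Gaussianity even for Gaussian innovations) and temporally dependent, so neither Gaussian comparison nor i.i.d.\ matrix concentration applies directly. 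The route is to use the moving-average representation $w_t = \sum_{j\ge 0}\Bst^j\epsilon_{t-1-j}$, so that $\op{vec}(\bXc)$ is a masked linear image of the stacked innovations through a block-Toeplitz matrix built from the powers $\Bst^j$; then $v'\hat\Gamma v$ becomes a quadratic form in the innovation vector whose kernel has spectral norm controlled on the unit circle by the transfer function $(I-\Bst z)^{-1}$, i.e.\ by $\tlone(\Bst)^2$ in the dense regime, while the relevant \emph{restricted} (sparse-direction) norm is controlled by $\matnorm{(I-\Bst z)^{-1}}_{1\to 2}^2$, that is by $\tltwo(\Bst)^2$. This is exactly the quantity entering $\theta_0 = \tltwo(\Bst)^2/\tlone(\Bst)^2$ in \eqref{eq:theta-def} and hence $\zeta$; applying the new Hanson--Wright-type inequality for innovations with the convex concentration property (Definition~\ref{def:CCP}) pointwise, followed by a discretization/peeling argument over approximately $k$-sparse vectors in the style of Raskutti--Wainwright--Yu and Loh--Wainwright, yields (\LRE). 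The sample-size hypothesis $\sqrt{\T/\log p} \ge \kappa_\epsilon\kappa_0\zeta(1-\delta)^{-2}(\tfrac1{27}\zeta - \delta\theta_0)^{-2}$, together with $\zeta > 27\delta\theta_0$, is precisely the condition forcing $\tau$ to be small relative to $\alpha$---in particular it ensures that the missing-data perturbation (of size $\asymp\delta\theta_0$ in these units) does not overwhelm the curvature.

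Finally I would assemble the pieces. On the event where (\LRE) and (\DB) hold, the $\ell_1$ constraint $\|B\|_1 \le b_0\sqrt k$ lets one absorb $\tau\|v\|_1^2 \le \tau b_0\sqrt k\,\|v\|_1$ into an effective inflation of the penalty, so the master theorem applies with $\lambda_\T$ and gives $\|\Bhat - \Bst\|_F \le 2\sqrt k\,\varphi_0\lambda_\T$ and $\|\Bhat - \Bst\|_1 \le 16k\,\varphi_0\lambda_\T$; the bound on $|\op{supp}(\tilde B)\setminus\op{supp}(\Bst)|$ follows since each surviving false entry of $\tilde B$ exceeds $\lambda_\T$ in magnitude, so their count is at most a universal multiple of $\|\Bhat - \Bst\|_1/\lambda_\T$ after using the cone inequality on the error off the support. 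For the purely constrained estimator \eqref{eq:multivar-estimator}, $\Bst$ is feasible, so the basic inequality from optimality of $\Bhat$ drives the same chain of bounds with the role of $\lambda_\T$ now played by $2\Phi/\varphi_0$, which is the stated conclusion.
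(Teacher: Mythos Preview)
Your proposal is correct and follows essentially the same route as the paper: apply the master theorem (Theorem~\ref{thm:master-multivar}) after verifying \ref{condn:concentration} via the block-Toeplitz/moving-average representation, Adamczak's Hanson--Wright inequality under the convex concentration property (Proposition~\ref{prop:conc-quad-bXc}), and the discretization-to-$\ell_1$ passage, and \ref{condn:Linf} via entrywise concentration plus a union bound over $p^2$ coordinates. The only point to sharpen is your handling of the cross term in the deviation bound: Proposition~\ref{prop:conc-quad-bXc} is stated only for \emph{quadratic} forms $v'(\cdot)v$, so rather than substituting $\Yc=\Bst\Xc+\mathcal{E}$ and tracking mask/transition commutators, the paper works at the masked-autocovariance level (equation~\eqref{eq:init-DB-stat}) and reduces the bilinear piece $u'\bXc\bYc'v$ to three quadratic forms by polarization (equation~\eqref{eq:cross-concentration}), which lets Proposition~\ref{prop:conc-quad-bXc} apply verbatim.
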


As mentioned before, there is a well-developed theory for providing guarantees on the performance of the LASSO and its variants which we summarize as Theorem~\ref{thm:master-multivar} in the Appendix. This framework requires establishing certain concentration properties for the underlying data generation process. We provide the required concentration results in Section~\ref{sec:conc} and combine them with the framework of Theorem~\ref{thm:master-multivar} to prove Theorem~\ref{thm:main}. In the following, we first provide further intuition into the results of Theorem~\ref{thm:main} in Section~\ref{sec:more-quant}. We then elaborate on the required conditions for proving Theorem~\ref{thm:main} and motivate our concentration result in Section~\ref{sec:lasso}.

\section{Remarks on the Main Quantities}\label{sec:more-quant}
The quantities appearing in Theorem~\ref{thm:main} worth further discussion. In this section, we provide further details on different quantities we defined in relation to the transition matrix of interest, $\Bst$.

\subsection{The Support}
First, as we will see in the concentration result, $\tltwo(\Bst)$ appears because of the missing data setup; specifically, due to the term $- \norm{B\bDc}_F^2$ in \eqref{eq:multivar-estimator-L1} and \eqref{eq:multivar-estimator}. Intuitively, we expect that the support of $\Bst$, and how each covariate affects the value of other covariates in the next time step (see Figure~\ref{fig1}), should play an important role in our ability in recovery from missing data. For example, if $\Bst$ is diagonal, then covariates are temporally uncorrelated (do not directly affect each other over time) and the entries of $\Bst$ have to be estimated independently. On the other hand, for more distributed supports of $\Bst$, we experience two competing phenomena:
\begin{itemize}
\item when each covariate is influenced by many covariates from the previous time point, residuals between $\Bst \Xc$ and $\Bst \bXc$ are generally smaller because the value of missing covariates play less of a role, so recovery is more robust to missing data. {This is captured by $\theta_0$ (defined in \eqref{eq:theta-def}) in our results. }
\item higher dependence among covariates makes observations more highly correlated and the resulting inverse problem becomes more ill-posed even when we have complete data. {This is captured by $\kappa_0$ (defined in \eqref{eq:kappa0-def}) in our results. }
\end{itemize}

In short, not all $k$-sparse $\Bst$ are equally easy or difficult to infer from incomplete data. For example, if only one column of $\Bst$ is nonzero (in-star graph in Figure~\ref{fig-stars}), then one element of $w_t$ is influenced by the previous realizations of the process, while the other covariates are not. If only one row of $\Bst$ is nonzero (out-star graph in Figure~\ref{fig-stars}), then all covariates are being influenced by the same single covariate and there are no other influences. Finally, if $\Bst$ is nonzero on a single off-diagonal, then the $i$-th covariate is only influencing covariate $i+1$, for $i=1,\ldots,p-1$, corresponding to a chain graph representation of influence structure. 


   \begin{figure}[thpb]
      \centering
      {\parbox{3in}{
\tikzstyle{every node}=[circle, draw, fill=black!30,
                        inner sep=0pt, minimum width=6pt]
\centering
\begin{tikzpicture}[thick,scale=0.5]%
\foreach \x in {18,90,...,306} {
	\draw[->, >=latex, shorten >=2pt] (\x:2) node {}  -- (0:0.3) node {};
}
\end{tikzpicture}\quad
\begin{tikzpicture}[thick,scale=0.5]
\foreach \x in {18,90,...,306}
    {
        \draw[->, >=latex, shorten >=2pt]   (.2:.1) node {}  -- (\x:2) node {};
    }
\end{tikzpicture}\quad
\begin{tikzpicture}[thick,scale=0.5]
\def\lastx{306}
\foreach \x [remember=\x as \lastx] in {18,90,...,306}
    {
        \draw[->, >=latex, shorten >=2pt]   (\lastx:2) node {}  -- (\x:2) node {};
    }
\end{tikzpicture}
}}
      \caption{In-star, out-star, and chain graphs. }
      \label{fig-stars}
   \end{figure}
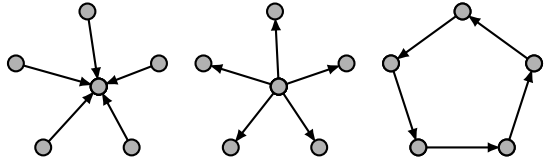

\subsection{Dimension-independence}\label{sec:dim-indep}
Denote the set of nonzero rows of $B$ by $J_r\subseteq\{1,\ldots,p\}$ and the set of its nonzero columns by $J_c\subseteq\{1,\ldots,p\}$. For $k = \norm{B}_0$, it is easy to see that $\abs{J_r}\leq k$ and $\abs{J_c}\leq k$, with $\abs{\cdot}$ denoting the size of the set. Moreover, for $J\coloneqq J_r\cup J_c$, all of the nonzero entries of $B$ are in a principal submatrix indexed by $J$. Therefore, for any integer value $t\geq 1$, all of the nonzero entries of $B^t$ are in the same principal sub-matrix indexed by $J$. Considering the Neumann series $(I-A)^{-1}=\sum_{t=0}^\infty A^t$ when $\rho(A)<1$, the above implies that $\tlz(B)$, $\tlone(B)$, and $\tltwo(B)$, are only concerned with the smallest principal submatrix of $B$ containing all of its nonzero entries and are {\em independent of the dimension of~$B$}: embedding $B$ into a larger zero matrix does not change these values, as desired. 

It is worth mentioning that the same conclusion, of independence from the ambient dimension, cannot be made about the quantity $\mathcal{M}(f_w,\s)$ used in \cite{MR3357870} as the innovations could make the time series fully supported over all entries. 

\subsection{Bounds}
While $B$ and the quantities in \eqref{def:tlz}, \eqref{def:tlone}, and \eqref{def:tltwo}, do not directly scale with each other, they have a close relationship that can be used in better understanding the main theorem. 
\begin{lemma}[Proposition~2.2~in~\cite{MR3357870}]\label{lem:basu}
	Suppose $\det(I-Bz)\neq 0$ for all $|z|\leq1$. Then, 
	\begin{align*}
		\tlz(B) 
		\leq 1 + \matnorm{B}_2
		\leq 1+\frac{  \matnorm{B}_1 +  \matnorm{B}_\infty  }{2}.
	\end{align*}
Moreover, if $B$ is diagonalizable, then
\begin{align*}
	\tlone(B) 
	\leq \frac{1}{1-\rho(B)} \matnorm{R}_2\matnorm{R^{-1}}_2
\end{align*}
where the columns of $R$ are the eigenvectors of $B$. 
\end{lemma}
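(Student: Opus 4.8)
The plan is to treat the two displayed bounds separately, each via elementary operator-norm manipulations plus one standard matrix-analysis fact. For the first chain of inequalities I would start from the definition $\tlz(B) = \max_{\abs{z}=1}\matnorm{I-Bz}_2$ and apply the triangle inequality for the spectral norm together with scalar homogeneity: for any $z$ with $\abs{z}=1$ we have $\matnorm{I-Bz}_2 \le \matnorm{I}_2 + \abs{z}\,\matnorm{B}_2 = 1 + \matnorm{B}_2$. Taking the maximum over the unit circle gives $\tlz(B)\le 1+\matnorm{B}_2$. The second inequality is the classical bound $\matnorm{B}_2 \le \sqrt{\matnorm{B}_1\,\matnorm{B}_\infty}$ (the spectral norm is controlled by the geometric mean of the maximum absolute column sum and the maximum absolute row sum — e.g.\ by Riesz--Thorin interpolation between the $\ell_1$ and $\ell_\infty$ operator norms, or by a direct Schur-test/Cauchy--Schwarz argument), followed by the AM--GM inequality $\sqrt{ab}\le (a+b)/2$.

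For the second bound I would first record that the hypothesis $\det(I-Bz)\ne 0$ for all $\abs{z}\le 1$ forces $\rho(B)<1$: writing $\det(I-Bz)=\prod_i(1-\lambda_i z)$ over the eigenvalues $\lambda_i$ of $B$, a nonzero $\lambda_i$ would contribute the root $z=1/\lambda_i$, which lies in the closed unit disk unless $\abs{\lambda_i}<1$, so every eigenvalue has modulus strictly below one. Then I would use diagonalizability: write $B = R\Lambda R^{-1}$ with $\Lambda=\op{diag}(\lambda_1,\dots,\lambda_p)$, so that $I-Bz = R(I-\Lambda z)R^{-1}$ and hence $(I-Bz)^{-1} = R(I-\Lambda z)^{-1}R^{-1}$ (the inverse exists for $\abs{z}=1$ by the previous paragraph). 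Submultiplicativity of the spectral norm then gives $\matnorm{(I-Bz)^{-1}}_2 \le \matnorm{R}_2\,\matnorm{(I-\Lambda z)^{-1}}_2\,\matnorm{R^{-1}}_2$. The middle factor is the spectral norm of a diagonal matrix, namely $\max_i \abs{1-\lambda_i z}^{-1}$, and for $\abs{z}=1$ we have $\abs{1-\lambda_i z}\ge 1-\abs{\lambda_i}\ge 1-\rho(B)>0$, so $\matnorm{(I-\Lambda z)^{-1}}_2 \le (1-\rho(B))^{-1}$. Taking the maximum over $\abs{z}=1$ yields $\tlone(B)\le (1-\rho(B))^{-1}\matnorm{R}_2\matnorm{R^{-1}}_2$, as claimed.

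There is no deep obstacle here; the argument is short and self-contained modulo one ingredient. The only step I would not prove from scratch is the inequality $\matnorm{B}_2\le\sqrt{\matnorm{B}_1\,\matnorm{B}_\infty}$, which I would cite from Section~5.6 of \cite{HornJohnson90} (or dispatch in one line via interpolation). The one point worth flagging is that diagonalizability is genuinely used in the second bound: it is what makes $(I-\Lambda z)^{-1}$ diagonal, so that its spectral norm is \emph{exactly} the maximum of the reciprocal moduli $\abs{1-\lambda_i z}^{-1}$ and the clean factor $1/(1-\rho(B))$ emerges; for a non-diagonalizable $B$ one would instead have to bound powers of Jordan blocks in the Neumann series $(I-Bz)^{-1}=\sum_{t\ge 0}(Bz)^t$, and the resulting estimate would be weaker.
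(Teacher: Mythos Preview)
The paper does not supply its own proof of this lemma; it is quoted as Proposition~2.2 of \cite{MR3357870} and simply used as a black box. Your argument is correct and is essentially the standard one: triangle inequality plus $\matnorm{I}_2=1$ and $\abs{z}=1$ for the first inequality; the classical bound $\matnorm{B}_2\le\sqrt{\matnorm{B}_1\matnorm{B}_\infty}$ (indeed found in Section~5.6 of \cite{HornJohnson90}, the reference the paper itself points to for such matrix facts) followed by AM--GM for the second; and diagonalization $B=R\Lambda R^{-1}$ together with submultiplicativity of $\matnorm{\cdot}_2$ and the scalar estimate $\abs{1-\lambda_i z}\ge 1-\rho(B)$ on $\abs{z}=1$ for the bound on $\tlone(B)$. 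Your remark that the stability hypothesis forces $\rho(B)<1$, and your closing note on why diagonalizability is genuinely needed for the clean $(1-\rho(B))^{-1}$ factor, are both accurate and add useful context that the paper omits.
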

Moreover, submultiplicativity of induced operator norms provides
\begin{align*}
\tltwo(B) 
\geq \max_{\abs{z}=1} ~ \matnorm{(I-Bz)}_{1\to 2}^{-1}  
\geq (1+ \matnorm{B}_{1\to 2})^{-1} \,.
\end{align*}
Therefore, when $B$ is diagonalizable, 
\begin{align*}
\sqrt{\theta_0}=	\frac{\tltwo(\Bst)}{\tlone(\Bst)} \geq 
	 \frac{1-\rho(B)}{ (1+ \matnorm{B}_{1\to 2}) \matnorm{R}_2\matnorm{R^{-1}}_2 }
\end{align*}
and 
\begin{align*}
\sqrt{\kappa_0} 
=	\tlz(B) \tlone(B) 
 \leq\frac{\matnorm{R}_2\matnorm{R^{-1}}_2}{1-\rho(B)}  (1+\frac{  \matnorm{B}_1 +  \matnorm{B}_\infty  }{2}). 
\end{align*}

\subsection{A Restrictive Assumption We Avoid} 
In this paper, we only assume stability, i.e., $\rho(\Bst) < 1$. This assumption is milder than the more stringent condition $\matnorm{\Bst}_2 < 1$ prevalent in the literature. Only requiring the milder assumption used in this paper has important practical consequences. While $\matnorm{\Bst}_2<1$ implies $\rho(\Bst)<1$ (hence the stability of the corresponding autoregressive process), $\matnorm{\Bst}_2<1$ is necessary only when $\Bst$ is symmetric. In other words, an {\em asymmetric} matrix $\Bst$ with $\matnorm{\Bst}_2 \geq 1$ can correspond to a stable autoregressive process; e.g., see Lemma E.1~in~\cite{MR3357870}. For example, for some $0<a<1$, the matrix
\begin{align*}
\Bst = \begin{bmatrix}	a & \frac{1}{a} \\  0 & a \end{bmatrix}
\end{align*}
has eigenvalues equal to $a$, hence a spectral radius of $a<1$, but an operator norm that is slightly larger than $\frac{1}{a} > 1$. Not assuming a spectral norm bound on the transition matrix becomes important in the study of vector autoregressive processes with a lag larger than one, defined as 
	\[
	w_{t} = B_1 w_{t-1} + B_2 w_{t-2} + \ldots + B_d w_{t-d} + \epsilon_t ,
	\]
	where $d\geq1$ is the lag. It is easy to see that the above can be reformulated as a vector autoregressive process with lag one, as
	\begin{align*}
	\begin{bmatrix}w_t\\w_{t-1}\\\vdots\\w_{t-d+1}\end{bmatrix} \!= \underbrace{\!\!\begin{bmatrix}
		B_1&B_2&\cdots & B_{d-1} & B_d \\
		I_p&0&\cdots & 0 & 0 \\
		\vdots&\vdots&\ddots & \vdots & \vdots \\
		0&0&\cdots & I_p & 0 
		\end{bmatrix}\!\!}_{{B}}
	\begin{bmatrix}w_{t-1}\\w_{t-2}\\\vdots\\w_{t-d}\end{bmatrix}	
	\!+\!
	\begin{bmatrix}\epsilon_t\\0\\\vdots\\0\end{bmatrix}.
	\end{align*}
Lemma E.2~in \cite{MR3357870} establishes the fact that $d>1$ implies $\matnorm{{B}}_2\geq 1$, even when $\rho({B}) < 1$, illustrating the restrictiveness of operator norm bound assumptions.

\section{Estimation Error for Non-convex LASSO} \label{sec:lasso}
Both \eqref{eq:multivar-estimator-L1} or \eqref{eq:multivar-estimator} can be viewed as constrained quadratic optimization programs, 
\begin{align}\label{eq:gen-quad}
\Bhat &\in \argmin_{ B\in \mathcal{B} } ~ \op{tr}(B \Quad B') - 2\langle B ,  \Lin \rangle + \lambda \norm{B}_1 , 
\end{align}
for corresponding choices of the constraint set $\mathcal{B}\subset \mathbb{R}^{p\times p}$ and regularization parameter $\lambda$, where 
\begin{align*}
\Quad = \frac{1}{\T} ( \bXc \bXc'  - \delta \op{diag}(\bXc \bXc ') )  
~,~~~
\Lin =\frac{1}{\T} \bYc\bXc'	.
\end{align*}
In this work, we are not concerned with the possible non-convexity of these estimators from a computational point of view and focus on the statistical performance. Nonetheless, simple algorithms such as variants of projected gradient descent can be used for convergence to a small neighborhood of the set of all global minimizers, similar to \cite{MR3015038}. We postpone such convergence guarantees to future work. 

LASSO \cite{Tibshirani96LASSO} and its variants have been used and studied extensively in the literature. We specifically use a popular approach for providing guarantees on the estimation performance of LASSO and its variants presented in \cite{Bickel2009RE}. We present a version of this result, tailored to norm-constrained $\ell_1$-regularized non-convex quadratic optimization, as Theorem~\ref{thm:master-multivar} in the Appendix. 
Theorem~\ref{thm:master-multivar} is essentially the same in any work on guarantees for LASSO and its variants, but lumps all the mechanical, and now well-known, parts of the process in one theorem and is discussed for clarity of our exposition. For example, the theorem can be seen as an extension of \cite[Theorem~5]{wong2016regularized} and \cite[Proposition 4.1]{MR3357870} for {\em non-convex} LASSO and an extension of \cite[Theorem~1]{MR3015038} for {\em transition matrix estimation} in multivariate time series.

To provide $\ell_1$ and $\ell_2$ norm error bounds for such estimates, following the framework developed in \cite{Bickel2009RE}, we need to establish the so-called {\em lower restricted eigenvalue condition}, stated equivalently \cite{MR3015038} as 
\begin{align}\label{def:RE}
v' \Quad v \geq \alpha_\low \norm{v}_2^2 - \tau_\low \norm{v}_1^2 
~~~\text{for all}~ v\in\mathbb{R}^p	,
\end{align}
as well as a {\em deviation bound}, 
\begin{align}\label{def:DB}
\norm{\Bst \Quad- \Lin }_\infty \leq 	c \sqrt{\frac{\log p}{\T}}\,,
\end{align}
where $c$ depends on the parameters of the problem instance. 

Since $Q$ and $L$ come from samples generated by the partial observation of a vector autoregressive model, they are random objects and reasonable values of $\alpha_\low$, $\tau_\low$, and $c$, in \eqref{def:RE} and \eqref{def:DB}, may be used only with high probability. Therefore, we use relevant concentration results to establish these bounds with high probability. 

In the following, we expand the conditions in \eqref{def:RE} and \eqref{def:DB} and represent them as simple functions of the autoregressive process, which will then be bounded in Section~\ref{sec:conc} using results on concentration of sub-Gaussian quadratic forms. Let us fix some notation first. For a $p$-dimensional discrete-time, centered, covariance-stationary (wide-sense stationary) process $\{w_t\}$, denote the autocovariance function by 
\begin{align*}
\Gamma_w(h) = \textup{cov}(w_t, w_{t+h}) .
\end{align*}
For a matrix $A$, the transpose is denoted by $A'$. Denote by $\odot$ and $\oslash$ the Hadamard (element-wise) product and division respectively, and by $\otimes$ the Kronecker product. The covariance matrix for the Bernoulli mask characterized in \eqref{eqn:obs-process} is given by 
\[
P = (1-\delta)^2 \mathbf{1} + \delta(1-\delta) I, 
\]
so that $\Quad = \frac{1}{\T} \bXc \bXc' \oslash P$.

\subsection{Restricted Eigenvalue Condition}
Observe that $\E\Quad = \Gamma_w(0) = \Gamma_{\bw}(0) \oslash \P$, which gives 
\begin{equation}\label{eq:init-RE-stat}
\begin{aligned}
\Quad-\E\Quad 
= (\frac{1}{\T} \bXc \bXc' - \Gamma_{\bw}(0)) 
- \delta (\frac{1}{\T} \bXc \bXc' - \Gamma_{\bw}(0)) \odot I .
\end{aligned}
\end{equation}
Then, bounding $\abs{v' (\Quad - \E \Quad)v}$, for all $v\in \mathbb{R}^p$, allows for establishing \eqref{def:RE} through the application of the triangle inequality. Suppose we established the following condition for a fixed value of $\s$ which will be determined later: 

\begin{quote}
\begin{enumerate}[label=(C\arabic*)]
\item\label{condn:concentration}
For any fixed $v\in\mathbb{R}^p$ with $\norm{v}_0 \leq 2\s$ and $\norm{v}_2 = 1$, there exists $\eta(s)$ such that $\abs{ v' (\Quad - \E \Quad)v } \leq \eta(\s)$ with probability at least $1-p_1(\s)$. 
\end{enumerate}
\end{quote}
Then, such concentration can be stated over the {\em set} of $2\s$-sparse vectors using a discretization argument, as in Lemma~F.2~of~\cite{MR3357870}, followed by a simple argument that relates the set of sparse vectors to those with a bounded $\ell_1$ norm, as in Lemma~12 of~\cite{MR3015038}. As the above calculations depend on the free parameter $\s$, it should be chosen in a way that makes $\eta(\s)$ as small as possible while maintaining the probability for \eqref{def:RE}, which depends on $p_1(\s)$ and the union bound in the discretization step, at a desired level. We specify our choice of $\s$ for the proof of Theorem \ref{thm:main} right after the statement of Theorem \ref{thm:master-multivar} in the Appendix.

\subsection{Deviation Bound}
The matrix of interest in \eqref{def:DB} is given by
\begin{equation}\label{eq:init-DB-stat}
\begin{aligned}
\Bst \Quad - \Lin
= \Bst(\frac{1}{\T}\bXc \bXc' \oslash \P - \Gamma_w(0)) 
 - \frac{1}{(1-\delta)^2} (\frac{1}{\T} \bXc\bYc' - \Gamma_{\bw}(1))	'
\end{aligned}
\end{equation}
where we used the fact that $\Bst \Gamma_w(0) = \Gamma_w(1)'$ and $\Gamma_w(1) = \frac{1}{(1-\delta)^2} \Gamma_{\bw}(1)$. 
The first assertion considers full information and is related to the interaction of $\{w_t\}$ and $\{\epsilon_t\}$ processes. In fact, using the original process in \eqref{eqn:VAR1} we get
\begin{align*}
\Gamma_w(1) -\Gamma_w(0) \Bst'
&= \textup{cov}(w_t, w_{t+1})  - \textup{cov}(w_t, w_{t}) \Bst'	\\
&= \textup{cov}(w_t, w_{t+1})  - \textup{cov}(w_t, \Bst w_{t}) 	\\
&= \textup{cov}(w_t, \epsilon_t) 
\end{align*} 
which is zero in our setup. 
For clarity, we state \eqref{def:DB} as another condition:
\begin{quote}
\begin{enumerate}[label=(C\arabic*),start=2]
\item\label{condn:Linf} 
There exists $\varphi>0$ such that $\norm{\Bst \Quad - \Lin }_\infty \leq \varphi$ with probability at least $1-p_2$.
\end{enumerate}
\end{quote}
Therefore, to derive the desired bounds in \eqref{def:RE} and \eqref{def:DB} and provide $\ell_1$ and $\ell_2$ norm error bounds for \eqref{eq:multivar-estimator-L1} and \eqref{eq:multivar-estimator}, we can establish \ref{condn:concentration} and~\ref{condn:Linf}; a complete description of this procedure is stated as Theorem~\ref{thm:master-multivar} in the Appendix. This amounts to computing concentration bounds on the four terms in \eqref{eq:init-RE-stat} and \eqref{eq:init-DB-stat}. To that end, we rewrite our process in matrix form and leverage classical linear time invariant dynamical systems to establish several quantities that characterize the process in \eqref{eqn:VAR1} and will appear in those concentration bounds. These relationships are summarized in Lemma \ref{lem:op-bounds} and lead to the main concentration result given in Proposition~\ref{prop:conc-quad-bXc}.

\section{Concentration of Sub-Gaussian Quadratic Forms}\label{sec:conc}
As mentioned before, establishing either of the conditions \eqref{eq:init-RE-stat} and \eqref{eq:init-DB-stat} relies on certain concentration properties for the underlying data generation process that defines $\Quad$ and~$\Lin$. In the following, we make this relationship concrete and provide the main concentration result in Proposition~\ref{prop:conc-quad-bXc}. 

To establish~\ref{condn:concentration} for \eqref{eq:init-RE-stat} (to get \eqref{def:RE}), we are interested in the concentration of $v'  \bXc\bXc' v$ and $v'  (\bXc\bXc' \odot I) v$ around their mean, for any fixed $v$. In the following, we manipulate these quantities into convex quadratic forms in terms of the noise vector 
\[
\e_\T' \coloneqq \begin{bmatrix} w_0' \; \epsilon_0' \; \epsilon_1' \; \cdots \; \epsilon_{\T-2}'  \end{bmatrix} .
\]
Define $I_\Omega \in\{0,1\}^{p\T \times p\T}$ as the diagonal matrix whose $(pt+ j)$-th diagonal entry is one if $(w_t)_j$ is observed and zero otherwise, for $t=0,\ldots,\T-1$ and $j=1,\ldots,p$. Moreover, define
\[
\Psi_\T(B) = \begin{bmatrix}
 I & 0 & 0 & \cdots & 0 \\	
 B & I & 0 & \cdots & 0 \\	
 \vdots & \vdots & \vdots & \ddots & \vdots \\
 B^{\T-1} & B^{\T-2} & B^{\T-3} & \cdots & I 
 \end{bmatrix}
\]
which is a block-Toeplitz matrix. Then, $v' \bXc\bXc' v  = \e_\T' \Psi_{(1)} ' \Psi_{(1)} \e_\T$ and $v' (\bXc\bXc' \odot I) v = \e_\T' \Psi_{(2)} ' \Psi_{(2)} \e_\T$ where 
\begin{align*}
	\Psi_{(1)} &\coloneqq (I_n \otimes v)' I_\Omega \Psi_\T(B) \\
	\Psi_{(2)} &\coloneqq (I_n \otimes \op{diag}(v)) I_\Omega \Psi_\T(B)\,.
\end{align*}
The latter is because of the following, 
\begin{align*}
v' ( \bXc \bXc' \odot I) v 
&= \langle \bXc \bXc' \odot I, vv' \rangle 
= \langle \bXc \bXc' , vv' \odot I \rangle \\
&= \langle \bXc \bXc' , \op{diag}(v)^2 \rangle 
= \norm{\op{diag}(v) \bXc}_F^2 \\
&= \norm{\op{vec}(\op{diag}(v) \bXc)}_2^2\\
&= \norm{(I_\T \otimes \op{diag}(v) )\op{vec}(\bXc)}_2^2 \\
& = \norm{(I_\T \otimes \op{diag}(v) ) I_\Omega \Psi_\T(B)  \e_\T  }_2^2 \,.
\end{align*}
The concentration of the above two quadratic forms, in $\Psi_{(1)}$ and $\Psi_{(2)}$, can be studied when we assume the so-called {\em convex concentration property} on noise vectors $\epsilon_t$, for $t=0,\ldots, \T-1$, or equivalently on the noise vector $\e_\T$. 
\begin{definition}[Convex concentration property, \cite{Adamczak15note}] \label{def:CCP}
Let $x$ be a random vector in $\mathbb{R}^n$. We will say that $x$ has the convex concentration property with constant $c_x$ if for every \mbox{1-Lipschitz} convex function $g:\mathbb{R}^n\to \mathbb{R}$, we have $\E \abs{g(x)}<\infty$ and for every $t>0$, 
\[
\mathbb{P}\left[ \; \abs{ g(x) - \E g(x) }\geq t \; \right] \leq 2 \exp(-t^2/c_x^2) \,.
\]
\end{definition}
If the above tail bound holds for all functions $g(x) = \langle x, u \rangle$ where $u\in\mathbb{R}^\T$ is any vector with $\norm{u}_2=1$, then $x$ is called {\em a sub-Gaussian random vector} \cite{MR2963170}. However, the convex concentration property requires such tail bound to hold for every 1-Lipschitz convex function, and characterizes a subclass for the sub-Gaussian random vectors. See \cite{Adamczak15note,VuWang15random} for examples of such random vectors. As pointed out by \cite{Adamczak15note}, $2c_x^2 \geq \matnorm{\Sigma_x}_2$ always holds. 

Improving upon a bound in \cite{VuWang15random}, 
Theorem~2.5~in~\cite{Adamczak15note} allows for bounding the deviations of our quadratic forms from their mean, as a function of 
$\matnorm{\Psi_{(i)}'\Psi_{(i)}}_2 =\matnorm{\Psi_{(i)}}_2^2$ and $\norm{\Psi_{(i)}'\Psi_{(i)}}_F^2$, which is at most $\T \norm{v}_0 \matnorm{\Psi_{(i)}}_2^4$, for $i=1,2$. 

These operator norms can be related to certain norms of the block-Toeplitz matrix $\Psi_\T(B)$. This matrix can in turn be related to a a transfer function that is used in the definitions of $\tlone(B)$ and $\tltwo(B)$. The result is summarized in the next lemma whose proof is given in the Appendix.

\begin{lemma}\label{lem:op-bounds}
With the above notation, the followings hold
\begin{alignat*}{4}
\matnorm{\Psi_{(1)}}_2 
& ~\leq~ \matnorm{\Psi_\T(B)}_2
&& ~\leq~ \tlone(B)
\\ 
\matnorm{\Psi_{(2)}}_2 
& ~\leq~ \matnorm{\Psi_\T(B)}_{1\to2}
&& ~\leq~ \tltwo(B).
\end{alignat*}
\end{lemma}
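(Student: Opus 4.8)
The plan is to establish each of the four inequalities by peeling off layers of the definitions of $\Psi_{(1)}$, $\Psi_{(2)}$, and $\Psi_\T(B)$. The two inequalities in the first column come from the fact that $\Psi_{(1)}$ and $\Psi_{(2)}$ are obtained from $\Psi_\T(B)$ by left-multiplication by contractive maps. Indeed $\Psi_{(1)} = (I_\T\otimes v)' I_\Omega \Psi_\T(B)$, and both $I_\Omega$ (a coordinate projection) and $(I_\T\otimes v)'$ (since $\norm{v}_2=1$, this is a partial isometry: $(I_\T\otimes v)'(I_\T\otimes v) = I_\T\otimes (v'v) = I_\T$) have operator norm at most one, so submultiplicativity of $\matnorm{\cdot}_2$ gives $\matnorm{\Psi_{(1)}}_2 \leq \matnorm{\Psi_\T(B)}_2$. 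For $\Psi_{(2)} = (I_\T\otimes\op{diag}(v)) I_\Omega \Psi_\T(B)$ the left factor is $I_\T\otimes\op{diag}(v)$, whose $1\to 2$ operator norm equals $\max_i |v_i| \le \norm{v}_2 = 1$; combined with $\matnorm{I_\Omega}_2\le 1$ this yields $\matnorm{\Psi_{(2)}}_{1\to 2}\leq \matnorm{\Psi_\T(B)}_{1\to 2}$, and I would note that for a matrix $M$ one trivially has $\matnorm{M}_2 \le \matnorm{M}_{1\to 2}\cdot(\text{something})$ — actually the cleaner route is to bound $\matnorm{\Psi_{(2)}}_2$ directly: $\matnorm{\Psi_{(2)}}_2 \le \matnorm{I_\T\otimes\op{diag}(v)}_2\cdot\matnorm{\Psi_\T(B)}_2$ is weaker than claimed, so instead I use $\matnorm{\Psi_{(2)}}_2 \le \matnorm{\Psi_{(2)}}_{1\to 2}$ only after first observing that $\bXc$ has $\le \norm{v}_0$ nonzero rows under $\op{diag}(v)$ — but in fact the statement only asserts $\matnorm{\Psi_{(2)}}_2 \le \matnorm{\Psi_\T(B)}_{1\to 2}$, which follows once we know $\matnorm{A}_2 \le \matnorm{A}_{1\to 2}$ fails in general; so the right reading is that we bound the relevant operator norm of $\Psi_{(2)}$ (the one that enters the Hanson--Wright application) by $\matnorm{\Psi_\T(B)}_{1\to 2}$, and I would make the inequality chain precise by tracking which norm of $\Psi_{(2)}$ is actually needed.

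The substantive work is the second column: bounding $\matnorm{\Psi_\T(B)}_2$ by $\tlone(B)$ and $\matnorm{\Psi_\T(B)}_{1\to 2}$ by $\tltwo(B)$. The key observation is that $\Psi_\T(B)$ is the $\T\times\T$ block-Toeplitz (lower-triangular) matrix with symbol $(I - Bz)^{-1} = \sum_{t\ge 0} B^t z^t$ truncated to the first $\T$ powers. I would invoke the standard fact (used in \cite{MR3357870}, and provable via Parseval on the circle) that for a block-lower-triangular block-Toeplitz matrix whose blocks are the Fourier coefficients of a matrix-valued function $\mathcal{A}(z)$ on $|z|=1$, the operator norm is bounded by $\sup_{|z|=1}\matnorm{\mathcal{A}(z)}_2$; applying this with $\mathcal{A}(z) = (I-Bz)^{-1}$ gives $\matnorm{\Psi_\T(B)}_2 \le \max_{|z|=1}\matnorm{(I-Bz)^{-1}}_2 = \tlone(B)$. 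For the $1\to 2$ bound I would run the analogous argument but test against a unit-$\ell_1$ input vector: writing $x = (x_0,\dots,x_{\T-1})$ with $\sum_t\norm{x_t}_1 = 1$ (or the relevant normalization), $\norm{\Psi_\T(B)x}_2$ is controlled in frequency by $\sup_{|z|=1}\matnorm{(I-Bz)^{-1}}_{1\to 2} = \tltwo(B)$, using that the $1\to 2$ norm of a convolution is bounded by the sup over the circle of the $1\to 2$ norm of the symbol. The one point requiring care is the precise normalization in passing between the block-vector $\ell_1$ norm and the scalar $\ell_1$ norm of the long vector, but this is a routine bookkeeping issue since $\matnorm{\cdot}_{1\to 2}$ only sees the single largest-contributing column.

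The main obstacle I anticipate is making the frequency-domain bound on the truncated block-Toeplitz matrix fully rigorous — in particular justifying that truncating the Neumann series to $\T$ terms can only decrease (or at worst not increase) the relevant operator norm relative to the full circulant/Laurent operator whose symbol is $(I-Bz)^{-1}$. The standard trick is to embed $\Psi_\T(B)$ as a submatrix (a compression) of the bi-infinite block-Toeplitz operator with that symbol, or to complete it to a block-circulant matrix of size $2\T$ (or larger) and use that the spectral norm of a block-circulant matrix equals the max over roots of unity of the spectral norm of the symbol evaluated there, then let the size grow so the max over roots of unity approaches $\sup_{|z|=1}$. Either way the compression step — "a submatrix has operator norm no larger than the whole" — does the real work, and I would cite the analogous argument in \cite{MR3357870} (Proposition 2.3 / its proof) rather than reprove it, noting only the modification needed to track the $1\to 2$ norm in place of the $2\to 2$ norm.
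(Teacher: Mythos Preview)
Your handling of $\matnorm{\Psi_{(1)}}_2 \le \matnorm{\Psi_\T(B)}_2$ and of the right-hand column (block--Toeplitz operator bounded by the sup of its symbol on the unit circle, i.e.\ the input--output gain of the LTI system with transfer matrix $(zI-B)^{-1}$) is correct and is exactly what the paper does; the compression/Parseval argument you sketch is just a more explicit version of the paper's one-line appeal to system gains.

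The genuine gap is the inequality $\matnorm{\Psi_{(2)}}_2 \le \matnorm{\Psi_\T(B)}_{1\to 2}$. You try three routes---bounding $\matnorm{\Psi_{(2)}}_{1\to 2}$ instead, plain submultiplicativity (which only yields $\matnorm{\Psi_\T(B)}_2$), and the false inequality $\matnorm{M}_2 \le \matnorm{M}_{1\to 2}$---and then retreat to ``track which norm of $\Psi_{(2)}$ is actually needed.'' But the lemma really is about the spectral norm $\matnorm{\Psi_{(2)}}_2$ (that is what feeds into the Hanson--Wright bound via $\matnorm{\Psi_{(2)}'\Psi_{(2)}}_2$ and $\norm{\Psi_{(2)}'\Psi_{(2)}}_F$), so this is not a matter of reinterpretation.

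The two ideas you are missing, and which the paper supplies, are: (i)~since $I_\Omega$ and $I_\T\otimes\op{diag}(v)$ are both diagonal they commute, so one can write $\Psi_{(2)} = I_\Omega\bigl(I_\T\otimes\op{diag}(v)\bigr)\Psi_\T(B)$ and strip off $I_\Omega$ by $\matnorm{I_\Omega}_2\le 1$; and (ii)~an elementary inequality (the paper's Lemma~\ref{lem:v-IJ-ineq}) of the form
\[
\matnorm{\op{diag}(v)\,A}_2
= \sup_{\norm{u}_2=1}\bigl\|(u\odot v)'A\bigr\|_2
\;\le\; \norm{v}_2\cdot\matnorm{I_{\op{supp}(v)}A}_{1\to 2},
\]
whose crux is the Cauchy--Schwarz step $\norm{u\odot v}_1 \le \norm{u}_2\norm{v}_2$, which converts the $\ell_2$ unit input $u$ (after entrywise multiplication by $v$) into an $\ell_1$-bounded input acting on $A$. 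Applying this with $A=\Psi_\T(B)$ and the block-diagonal $v$ gives the $1\to 2$ bound directly, without ever needing $\matnorm{M}_2\le\matnorm{M}_{1\to 2}$. This is the step your proposal does not reach.
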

All in all, we get the following concentration result. 
\begin{proposition}\label{prop:conc-quad-bXc}
Consider the autoregressive time series in~\eqref{eqn:VAR1} where all $\epsilon_t$, for $t=0,1,\ldots, \T-1$, are temporally uncorrelated, have zero mean and variance $\Sigma_\epsilon$, and satisfy the convex concentration property with constant $c_\epsilon$. Moreover, consider $\{\bw_t\}$ as the partially observed time series corresponding to $\{w_t\}$, as characterized in \eqref{eqn:obs-process}. Then, for any fixed vector $v\in \mathbb{R}^p$ with $\norm{v}_0 \geq 2\norm{\Bst}_0$, any $t>0$, and any $r>0$,
	\begin{equation}\label{eq:first-concentration-bound}
	\begin{aligned}
	\mathbb{P} \left[  \abs{v' (\frac{1}{\T}\bXc\bXc' - \Gamma_{\bw}(0) ) v } \geq t \tlone(B)^2 \matnorm{\Sigma_\epsilon}_2  \right]
	\leq 2\exp \left( -\frac{\T \matnorm{\Sigma_\epsilon}_2 }{c_a c_\epsilon^2 } \min \left\{ t^2,  t  \right\} \right) 
	\end{aligned}
	\end{equation}
and 
	\begin{align}
	\mathbb{P} \left[  \abs{v' ((\frac{1}{\T}\bXc\bXc' - \Gamma_{\bw}(0))\odot I ) v } \geq r \norm{v}_0 \tltwo(B)^2 \matnorm{\Sigma_\epsilon}_2   \right] 
	\leq 2\exp \left( -\frac{\T \norm{v}_0 \matnorm{\Sigma_\epsilon}_2 }{c_a c_\epsilon^2 } \min \left\{ r^2,  r  \right\} \right) \label{eq:first-concentration-bound2}
	\end{align}	
	where $c_a$ is a universal constant. 
\end{proposition}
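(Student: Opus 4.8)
\textbf{Proof plan for Proposition~\ref{prop:conc-quad-bXc}.}
The plan is to reduce each of the two quadratic forms in $\bXc$ to a convex quadratic form in the stacked innovation vector $\e_\T$, apply the Hanson--Wright-type deviation bound for random vectors with the convex concentration property (Theorem~2.5 in \cite{Adamczak15note}), and then control the two operator-norm quantities that appear in that bound using Lemma~\ref{lem:op-bounds}. Concretely, for a fixed $v$ with $\norm{v}_0\ge 2\norm{\Bst}_0$ we already have the identities
$v'\bXc\bXc' v = \e_\T' \Psi_{(1)}'\Psi_{(1)}\e_\T$ and $v'(\bXc\bXc'\odot I)v = \e_\T'\Psi_{(2)}'\Psi_{(2)}\e_\T$,
with $\Psi_{(1)} = (I_\T\otimes v)'I_\Omega\Psi_\T(B)$ and $\Psi_{(2)} = (I_\T\otimes\op{diag}(v))I_\Omega\Psi_\T(B)$. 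Since $\e_\T$ is built from $w_0=0$ and the $\epsilon_t$, which by hypothesis (being temporally uncorrelated and each satisfying the convex concentration property with constant $c_\epsilon$) make $\e_\T$ itself a vector with the convex concentration property with a constant of the same order, the quadratic-form concentration inequality applies conditionally on the mask $I_\Omega$.

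First I would invoke the Hanson--Wright bound of \cite{Adamczak15note}: for $x$ with the convex concentration property with constant $c_x$ and any matrix $M$,
$\mathbb{P}[\,|x'Mx - \E x'Mx|\ge t\,] \le 2\exp\!\big(-\tfrac{1}{c}\min\{ t^2/(c_x^4\norm{M}_F^2),\ t/(c_x^2\matnorm{M}_2)\}\big)$
for a universal constant $c$. Applying this with $M = \Psi_{(i)}'\Psi_{(i)}$ gives a deviation bound governed by $\matnorm{\Psi_{(i)}'\Psi_{(i)}}_2 = \matnorm{\Psi_{(i)}}_2^2$ and $\norm{\Psi_{(i)}'\Psi_{(i)}}_F \le \sqrt{\T\norm{v}_0}\,\matnorm{\Psi_{(i)}}_2^2$ (the latter because $\Psi_{(i)}'\Psi_{(i)}$ has rank at most $\T\norm{v}_0$ after accounting for the structure of the projection/masking). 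Next I would feed in Lemma~\ref{lem:op-bounds}, namely $\matnorm{\Psi_{(1)}}_2\le\tlone(B)$ and $\matnorm{\Psi_{(2)}}_2\le\tltwo(B)$, so the deviation of $v'\bXc\bXc' v$ is controlled by $\T\,\tlone(B)^2$-type scales and that of $v'(\bXc\bXc'\odot I)v$ by $\T\norm{v}_0\,\tltwo(B)^2$-type scales. Dividing through by $\T$, rescaling the deviation level $t\mapsto t\,\tlone(B)^2\matnorm{\Sigma_\epsilon}_2$ (resp.\ $r\mapsto r\norm{v}_0\tltwo(B)^2\matnorm{\Sigma_\epsilon}_2$), and absorbing the appearance of $\matnorm{\Sigma_\epsilon}_2$ through the inequality $c_\epsilon^2\gtrsim\matnorm{\Sigma_\epsilon}_2$ (the relation $2c_x^2\ge\matnorm{\Sigma_x}_2$ noted after Definition~\ref{def:CCP}), the $\min\{t^2,t\}$ and $\min\{r^2,r\}$ forms on the right-hand sides of \eqref{eq:first-concentration-bound} and \eqref{eq:first-concentration-bound2} emerge, with the universal constant $c_a$ collecting $c$ and the numerical factors.

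Two points need care, and the second is the main obstacle. The easier one is that the bound above is conditional on $I_\Omega$; but since the resulting tail bound does not depend on the realization of $I_\Omega$ (only $\matnorm{\Psi_\T(B)}_2$ and $\matnorm{\Psi_\T(B)}_{1\to2}$ enter, and the masking is a coordinate projection that only decreases these norms), one can take expectation over the mask and the unconditional bound is unchanged. The genuinely delicate step is justifying that $\e_\T$ inherits the convex concentration property with a controlled constant from the individual $\epsilon_t$: stacking independent (here: temporally uncorrelated, and one must be slightly careful whether uncorrelated suffices or independence is being used implicitly) vectors each with the convex concentration property yields a vector with the convex concentration property, with constant scaling like $c_\epsilon$ up to a universal factor --- this is where I expect the bulk of the technical effort, since it requires either a tensorization argument for convex concentration or a direct appeal to a stability property of the class, and it is the place where the hypotheses on $\{\epsilon_t\}$ are actually consumed. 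The remaining manipulations --- the rank bound on $\Psi_{(i)}'\Psi_{(i)}$, the passage from $\matnorm{M}_2,\norm{M}_F$ to the stated scales via Lemma~\ref{lem:op-bounds}, and the bookkeeping of constants into $c_a$ --- are routine.
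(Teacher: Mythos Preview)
Your plan matches the paper's proof almost exactly: write each quadratic form as $\e_\T'\Psi_{(i)}'\Psi_{(i)}\e_\T$, apply Adamczak's Hanson--Wright bound (Theorem~2.5 in \cite{Adamczak15note}) conditionally on $I_\Omega$, control $\norm{\Psi_{(i)}'\Psi_{(i)}}_F$ by a rank argument, and bound $\matnorm{\Psi_{(i)}}_2$ via Lemma~\ref{lem:op-bounds}. The one place to tighten is the rank bound for $i=1$: since $\Psi_{(1)}=(I_\T\otimes v)'I_\Omega\Psi_\T(B)$ has only $\T$ rows, the paper uses $\op{rank}(\Psi_{(1)}'\Psi_{(1)})\le \T$ rather than your blanket $\T\norm{v}_0$, and this sharper bound is exactly why $\norm{v}_0$ does not appear in the exponent of \eqref{eq:first-concentration-bound}; for $i=2$ your bound $\T\norm{v}_0$ is the one the paper uses.
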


\begin{proof}[Sketch of Proof of Proposition~\ref{prop:conc-quad-bXc}]
The proof is by plugging the bounds of Lemma~\ref{lem:op-bounds} in Theorem~2.5~of \cite{Adamczak15note} followed by some algebraic manipulations. For the first bound, we bound the operator norm by a scaled Frobenius norm via $\norm{\Psi_{(1)}'\Psi_{(1)}}_F^2 \leq \T \matnorm{\Psi_{(1)}}_2^4$. For the second bound, we use
\begin{align*}
\norm{\Psi_{(2)}'\Psi_{(2)}}_F^2 
&\leq \op{rank}(\Psi_{(2)}'\Psi_{(2)}) \matnorm{\Psi_{(2)}'\Psi_{(2)}}_2^2\\
&= \op{rank}(\Psi_{(2)}) \matnorm{\Psi_{(2)}}_2^4 \\
&\leq \op{rank}(I_n \otimes \op{diag}(v)) \matnorm{\Psi_{(2)}}_2^4\\
&= \T \norm{v}_0 \cdot \matnorm{\Psi_{(2)}}_2^4 
\,.\qedhere
\end{align*}
\end{proof}

As it is evident from \eqref{eq:first-concentration-bound} and \eqref{eq:first-concentration-bound2}, they can be directly used to bound the quadratic forms in both terms in \eqref{eq:init-RE-stat} and in the first term in \eqref{eq:init-DB-stat}. For the last term in \eqref{eq:init-DB-stat}, we can derive another concentration result from \eqref{eq:first-concentration-bound}. Observe that 
\begin{align}
2u' (\frac{1}{\T}\bXc\bYc' - \Gamma_{\bw}(1) ) v \label{eq:cross-concentration}
&= \frac{2}{\T} u'\bXc\bYc'v - 2u'\Gamma_{\bw}(1)v \\
&= \frac{1}{\T} (\bXc'u + \bYc'v)' (\bXc'u + \bYc'v) \nonumber\\
&- 	\begin{bmatrix} u'& v'\end{bmatrix}
	\begin{bmatrix}\Gamma_{\bw}(0) & \Gamma_{\bw}(1) \nonumber\\
	\Gamma_{\bw}(1)'& \Gamma_{\bw}(0)\end{bmatrix}
	\begin{bmatrix}u\\ v\end{bmatrix}\\
&- ( \frac{1}{\T} u'\bXc \bXc'u - u'\Gamma_{\bw}(0)u )
- ( \frac{1}{\T} v'\bYc \bYc'v - v'\Gamma_{\bw}(0)v ) \,. \nonumber
\end{align}
Remember $\bar{W} = \begin{bmatrix}	\bw_0 \;\cdots \; \bw_\T\end{bmatrix}$ and observe that $\bXc$ and $\bYc$ are simply subsets of this matrix. Hence, $u'\bXc$ and $v'\bYc$ can be expressed similarly through $\Psi_{\T+1}(B)$, and choosing certain rows (corresponding to $\bXc$ and $\bYc$ being subsets of $\bar{W}$) does not increase the operator norm. 

\section{Conclusions}
This paper presented a new methodology and associated performance guarantees for estimating the parameters of linear vector autoregressive processes by leveraging 1) ideas from sparse regression and the LASSO, 2) estimators designed for robustness to missing data, and 3) concentration results from empirical process theory. Note that optimization problems in \eqref{eq:multivar-estimator-L1} and \eqref{eq:multivar-estimator} are possibly non-convex because of the $-\norm{B\bDc}_F^2$ term. Without this term we would have a convex formulation, but would not have a consistent estimator. Our approach generalizes to other measurement schemes beyond multiplication by i.i.d.~Bernoulli masks as in \eqref{eqn:obs-process}. In fact, we can adapt our analysis to any covariance-stationary observation process independent of the underlying process whose autocovariance matrices of lag $0$ and $1$ have no zero entries. 

\bibliographystyle{alpha}
\bibliography{\jobname}

\appendix

\section{Derivation of the Estimators in \eqref{eq:multivar-estimator-L1} and \eqref{eq:multivar-estimator}}\label{sec:der}
In this section, we motivate the design of the proposed estimators in \eqref{eq:multivar-estimator-L1} and \eqref{eq:multivar-estimator}. First, we review the relevant notation. For a $p$-dimensional discrete-time, centered, covariance-stationary (wide-sense stationary) process $\{w_t\}$, denote the autocovariance function by $\Gamma_w(h) = \textup{cov}(w_t, w_{t+h})$. For a matrix $A$, the transpose is denoted by $A'$ and the conjugate transpose is denoted by $A^\dagger$. Denote by $\odot$ and $\oslash$ the Hadamard (element-wise) product and division respectively, and by $\otimes$ the Kronecker product. Assuming the process is stationary and ignoring for the moment the fact that $\Sigma_\epsilon$ might not be the identity matrix, for any $t$, {\em the best linear estimator} for $\Bst$ is given by
\begin{align}
 	B^\star 
 	&= \mathop{\argmin}_B ~ \E\norm{w_{t+1} - B w_t}_2^2 \nonumber\\
 	&= \mathop{\argmin}_B ~ 
 	\E\norm{w_{t+1}}_2^2 
 	+ \E\,\op{tr}(B w_t w_t' B' ) 
 	- 2 \E\,\op{tr}(Bw_t w_{t+1}') \nonumber\\
 	&= \mathop{\argmin}_B ~ \langle \Gamma_w(0) , B'B \rangle - 2 \langle \Gamma_w(1) , B' \rangle
 	\label{eq:best-lin-est}
\end{align}
and replacing the autocovariance with its sample approximation yields
\begin{align*}
 	B^\star 
 	\simeq \mathop{\argmin}_B ~  \langle \frac{1}{\T}\Xc\Xc' , B'B \rangle - 2 \langle \frac{1}{\T}\Xc\Yc' , B' \rangle 
 	=\argmin_B ~ \frac{1}{\T} \norm{B \Xc - \Yc}_F^2 \,.
\end{align*}
Given the prior information that $\Bst$ is sparse, and provided that we have complete information on $\Xc$ and~$\Yc$, we can solve either of the following convex optimization problems to estimate $\Bst$:
\begin{align*}
 	\Bhat_\ful =\argmin_B ~ \frac{1}{\T} \norm{B \Xc - \Yc}_F^2  	+ \lambda_\T \|B\|_1  
 	~~~~\text{or}~~~~
 	\Bhat_\ful =\argmin_{\norm{B}_1\leq \norm{\Bst}_1} ~ \frac{1}{\T} \norm{B \Xc - \Yc}_F^2  	\,.
\end{align*}
Guarantees on support recovery as well as different error measures for $\Bhat_\ful$ with respect to $\Bst$ can be derived through establishing the now well-known lower restricted eigenvalue condition and deviation bound for sample statistics $\Xc\Xc'$ and $\Xc\Yc'$ \cite{Bickel2009RE,vdGeerBuhlmannRE}. 

\paragraph{Multiplicative Corruption.} When $\Xc$ and $\Yc$ are not fully observed, the above estimators cannot be used anymore. However, going back to \eqref{eq:best-lin-est}, we can design a new estimator from scratch if we can estimate the autocovariance matrices $\Gamma_w(0)$ and $\Gamma_w(1)$ from the given partial data. Suppose that the underlying process $\{w_t\}$ is observed through the lens of another covariance-stationary process $\{m_t\}$, independent of $\{w_t\}$: 
\begin{align}\label{eq:partial}
\bw_t = w_t \odot m_t \,.
\end{align}
In this case, for any integer value $h$, we have: 
\[
\Gamma_{\bw}(h) 
= \op{cov}(\bw_t,\bw_{t+h})
= \E((w_t \odot m_t)(w_{t+h} \odot m_{t+h})')
= \Gamma_w(h) \odot \Gamma_m(h)
\]
where we used the fact that $\E w_t = 0$ and the independence of $w_t$ and $m_t$ implies $\E \bw_t = 0$ regardless of $m_t$ being centered or not. Suppose that for the observation process $\{m_t\}$, the autocovariance matrices $\Gamma_m(0)$ and $\Gamma_m(1)$ have {\em no zero entries}. In this case, we have
\[
\Gamma_w(0) = \Gamma_{\bw}(0) \oslash \Gamma_m(0) 
~~~\text{and}~~~
\Gamma_w(1) = \Gamma_{\bw}(0) \oslash \Gamma_m(1) 
\]
which can be plugged in \eqref{eq:best-lin-est} to yield 
\begin{align*}
 	B^\star 
 	&= \mathop{\argmin}_B ~ \E\norm{w_{t+1} - B w_t}_2^2 \\
 	&= \mathop{\argmin}_B ~ \langle \Gamma_{\bw}(0) \oslash \Gamma_m(0)  , B'B \rangle - 2 \langle \Gamma_{\bw}(0) \oslash \Gamma_m(1)  , B' \rangle 
\end{align*}
whose approximation via the sample autocovariance matrices gives
\begin{align} \label{eq:best-lin-obs}
 	B^\star 
 	&\simeq \mathop{\argmin}_B ~  \langle \frac{1}{\T}\bXc\bXc' \oslash \Gamma_m(0)  , B'B \rangle - 2 \langle \frac{1}{\T}\bXc\bYc' \oslash \Gamma_m(1) , B' \rangle \,.
\end{align}
Observe that the Hadamard division by $\Gamma_m(0)$ can make the quadratic term non-convex. 

\paragraph{Missing Data.} 
A simple scenario for partial observations is when each $m_t$ in \eqref{eq:partial} has entries drawn i.i.d.~from a Bernoulli distribution of parameter $1-\delta$, for some $\delta\in[0,1)$. In this case,
\[
\Gamma_m(0) = (1-\delta)^2 \mathbf{1} + \delta(1-\delta) I
~~~\text{and}~~~
\Gamma_m(1) = (1-\delta)^2 \mathbf{1}
\]
have no zero entries and \eqref{eq:best-lin-obs} can be simply expressed as 
\begin{align*}
B^\star &\simeq \argmin_B ~ \frac{1}{\T}\op{tr}(B  ( \bXc \bXc'  - \delta \op{diag}(\bXc \bXc ') )  B') 
 	- \frac{2}{\T} \op{tr}(\bYc' B \bXc )	\\
 	&= \argmin_B ~ \frac{1}{\T} \norm{B \bXc - \bYc}_F^2 
- \delta \norm{B\bar{\cal{D}}}_F^2
\end{align*}
where $\bar{\cal{D}} = (\frac{1}{\T}\op{diag}(\bXc \bXc '))^{1/2} \in{\mathbb{R}^{p\times p}}$ is a diagonal matrix of sample autocovariances for each of the $p$ covariates. In this case, with a possibly non-convex quadratic optimization program, we need a constrained optimization program to hope for recovering the target model. Again, given the prior information that $\Bst$ is sparse, we can use regularization or an $\ell_1$-norm constraint. With this consideration, we arrive at the problems in \eqref{eq:multivar-estimator-L1} and \eqref{eq:multivar-estimator}.

\section{Estimation Error for Non-convex LASSO}
\begin{theorem}\label{thm:master-multivar}
Consider two random matrices, a symmetric matrix $\Quad\in\mathbb{R}^{p\times p}$ and a matrix $\Lin\in\mathbb{R}^{p\times p}$, as well as a reference matrix $\Bst\in\mathbb{R}^{p\times p}$ with $\norm{\Bst}_0 = k$, and an integer $\s \geq 1$. Suppose the following conditions hold:
\begin{enumerate}[label=(C\arabic*)]
\item
For any $v\in\mathbb{R}^p$ with $\norm{v}_0 \leq 2\s$ and $\norm{v}_2 = 1$, there exists $\eta(s)$ such that $\abs{ v' (\Quad - \E \Quad)v } \leq \eta(\s)$ with probability at least $1-p_1(\s)$, 

\item
There exists $\varphi>0$ such that $\norm{\Bst \Quad - \Lin }_\infty \leq \varphi$ with probability at least $1-p_2$.
\end{enumerate}
Consider either of the following estimators, 
\begin{align}
\Bhat &\in \argmin_{ \norm{B}_1 \leq b_0 \sqrt{k} } ~ \op{tr}(B \Quad B') - 2\langle B ,  \Lin \rangle + \lambda \norm{B}_1 \label{eq:multivar-estimator-L1-gen} 
\\
\Bhat &\in \argmin_{ \norm{B}_1 \leq \norm{\Bst}_1 } ~ \op{tr}(B \Quad B') - 2\langle B ,  \Lin \rangle \label{eq:multivar-estimator-gen} \,.
\end{align}
Consider the largest value of $\s$ that satisfies 
\begin{align}\label{eq:tau-reqs}
\eta(\s) \leq
\frac{1}{27}\min \left\{
\frac{ \Lambda_{\min}(\E\Quad) \cdot \s }{ 128 k+\s } ~,~
\frac{\varphi \cdot \s }{b_0\sqrt{k}}
\right\}
\end{align}
while 
\begin{align*}
	p_3(\s) \coloneqq p_1(\s) \cdot \exp (2\s \min\{ \log p , \log \frac{21ep}{2\s}\}) \ll 1 
\end{align*}
and define $\alpha_\low \coloneqq \Lambda_{\min}(\E\Quad) -27 \eta(\s)$. 
Then, for any $\Bst$ with $\norm{\Bst}_0 \leq k$, there is a universal positive constant $c_0$ such that any global optimum $\Bhat$ of~\eqref{eq:multivar-estimator-L1-gen} with any $b_0 \geq \norm{\Bst}_F$ and $\lambda \geq 2\varphi$ satisfies the bounds
\begin{align*}
	\|\Bhat - \Bst\|_F \leq \frac{c_0 \sqrt{k}}{ \alpha_\low } \lambda
	~~~,~~~
	\|\Bhat - \Bst\|_1 \leq \frac{8c_0 k}{ \alpha_\low } \lambda
\end{align*}
with probability at least $1 - p_3(s) -p_2$. The same bounds,  where $\lambda$ is replaced by $\varphi$, apply to \eqref{eq:multivar-estimator-gen}. Further, a threshold variant of \eqref{eq:multivar-estimator-L1-gen}, defined as $\tilde B = \{\Bhat_{ij} \mathbf{1}_{ |\Bhat_{ij}|>\lambda } \}_{i,j=1,\ldots,p}\,$, satisfies 
\begin{align*}
	\abs{\, \op{supp}(\tilde B) \setminus \op{supp}(\Bst) \,} \leq  \frac{56c_0 k}{\alpha_\low} \,. 
\end{align*}
\end{theorem}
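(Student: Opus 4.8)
The plan is to follow the by-now-standard primal-feasibility argument for $\ell_1$-penalized (or $\ell_1$-constrained) least squares of Bickel--Ritov--Tsybakov type, in the non-convex variant of \cite{MR3015038}: reduce the two hypotheses (C1)--(C2) to a \emph{lower restricted eigenvalue} inequality of the form \eqref{def:RE} together with a \emph{deviation bound} \eqref{def:DB}, and then run the deterministic cone argument on the error $\Delta \coloneqq \Bhat - \Bst$. Throughout I would use the row decomposition $\op{tr}(B\Quad B') = \sum_{i=1}^p b_i' \Quad b_i$ with $b_i$ the $i$-th row of $B$, so that a \emph{vector} restricted-eigenvalue statement upgrades to a \emph{matrix} one: summing $b_i'\Quad b_i \ge \alpha_\low\norm{b_i}_2^2 - \tau_\low\norm{b_i}_1^2$ over $i$ and using $\sum_i\norm{b_i}_1^2 \le (\sum_i\norm{b_i}_1)^2 = \norm{B}_1^2$ yields $\op{tr}(B\Quad B') \ge \alpha_\low\norm{B}_F^2 - \tau_\low\norm{B}_1^2$. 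This is exactly why hypothesis (C1) needs to be stated only for vectors.

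First I would upgrade (C1) to a uniform statement over all $2\s$-sparse unit vectors. Fixing a support of size $2\s$, taking a sufficiently fine net of the corresponding unit sphere, a union bound over the $\binom{p}{2\s}\le\min\{p^{2\s},(ep/2\s)^{2\s}\}$ supports, and the usual net-to-sphere comparison for quadratic forms turn the pointwise bound of (C1) into $\sup\{\abs{v'(\Quad-\E\Quad)v}:\norm{v}_0\le 2\s,\ \norm{v}_2=1\}\le c\,\eta(\s)$ on an event of probability at least $1-p_1(\s)\exp(2\s\min\{\log p,\log\frac{21ep}{2\s}\}) = 1-p_3(\s)$; this is the discretization step carried out in Lemma~F.2 of \cite{MR3357870}. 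The sparse-to-general conversion (Lemma~12 of \cite{MR3015038}) then gives, for \emph{every} $v\in\mathbb{R}^p$, $\abs{v'(\Quad-\E\Quad)v}\le 27\,\eta(\s)(\norm{v}_2^2+\tfrac1\s\norm{v}_1^2)$, and since $\E\Quad\succeq\Lambda_{\min}(\E\Quad)\,I$ we obtain \eqref{def:RE} with $\alpha_\low=\Lambda_{\min}(\E\Quad)-27\eta(\s)$ and $\tau_\low=27\eta(\s)/\s$. The two clauses of requirement \eqref{eq:tau-reqs} are then exactly $128\,k\,\tau_\low\le\alpha_\low$ (so $\tau_\low$ is negligible against $\alpha_\low$ once the error lies in a $k$-cone) and $b_0\sqrt k\,\tau_\low\le\varphi$ (so $\tau_\low$ times the a priori $\ell_1$-radius of the feasible set is controlled by the deviation level).

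Next, the deterministic part on the event of (C2). Feasibility of $\Bst$ holds because $b_0\ge\norm{\Bst}_F$ forces $\norm{\Bst}_1\le\sqrt k\,\norm{\Bst}_F\le b_0\sqrt k$ for \eqref{eq:multivar-estimator-L1-gen}, and trivially for \eqref{eq:multivar-estimator-gen}; the $\ell_1$-constraint also gives the a priori bound $\norm{\Delta}_1\le 2b_0\sqrt k$. Optimality of $\Bhat$ and (C2) give the basic inequality $\op{tr}(\Delta\Quad\Delta')\le 2\langle\Delta,\,\Bst\Quad-\Lin\rangle+\lambda(\norm{\Bst}_1-\norm{\Bhat}_1)\le 2\varphi\norm{\Delta}_1+\lambda(\norm{\Delta_S}_1-\norm{\Delta_{S^c}}_1)$, with $S=\op{supp}(\Bst)$. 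Feeding the matrix lower-RE bound $\op{tr}(\Delta\Quad\Delta')\ge-\tau_\low\norm{\Delta}_1^2\ge-\tau_\low\cdot 2b_0\sqrt k\cdot\norm{\Delta}_1\ge-2\varphi\norm{\Delta}_1$ into this, together with $\lambda\gtrsim\varphi$, yields by the standard peeling argument the cone condition $\norm{\Delta_{S^c}}_1\le 3\norm{\Delta_S}_1$ (numerical constants absorbed), hence $\norm{\Delta}_1\le 4\norm{\Delta_S}_1\le 4\sqrt k\,\norm{\Delta}_F$. Within the cone $\norm{\Delta}_1^2\le 16k\norm{\Delta}_F^2$, so the first clause of \eqref{eq:tau-reqs} gives $\op{tr}(\Delta\Quad\Delta')\ge(\alpha_\low-16k\tau_\low)\norm{\Delta}_F^2\ge\tfrac78\alpha_\low\norm{\Delta}_F^2$; combined with the upper bound $\op{tr}(\Delta\Quad\Delta')\le 2\lambda\norm{\Delta_S}_1\le 2\lambda\sqrt k\,\norm{\Delta}_F$ this gives $\norm{\Delta}_F\lesssim\sqrt k\,\lambda/\alpha_\low$ and then $\norm{\Delta}_1\le 4\sqrt k\,\norm{\Delta}_F\lesssim k\lambda/\alpha_\low$, the universal constant $c_0$ absorbing the factors. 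For \eqref{eq:multivar-estimator-gen} there is no $\lambda$: the same chain runs verbatim with $\lambda$ replaced by $\varphi$ and the cone read off directly from $\norm{\Bhat}_1\le\norm{\Bst}_1$. Finally, for the thresholded $\tilde B$, any $(i,j)\notin S$ with $\tilde B_{ij}\ne 0$ has $\abs{\Delta_{ij}}=\abs{\Bhat_{ij}}>\lambda$, so the number of such entries is at most $\norm{\Delta_{S^c}}_1/\lambda\le 3\norm{\Delta_S}_1/\lambda\le 3\sqrt k\,\norm{\Delta}_F/\lambda\lesssim k/\alpha_\low$.

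The step I expect to be most delicate is the bookkeeping in the last paragraph: arranging that the two clauses of \eqref{eq:tau-reqs} do their separate jobs — one killing $k\tau_\low$ against $\alpha_\low$ \emph{inside} the cone, the other killing $b_0\sqrt k\,\tau_\low$ against $\varphi$ \emph{before} the cone is established — so that the $-\tau_\low\norm{\cdot}_1^2$ slack introduced by the non-convexity of $\Quad$ never dominates. Pinning down the cone constant and the final universal constant $c_0$ consistently across the penalized estimator, the constrained estimator, and the support bound also requires care, though it is routine once the structure above is in place.
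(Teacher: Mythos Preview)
Your proposal is correct and follows precisely the approach the paper has in mind: the paper explicitly omits the proof of this theorem, describing it as ``essentially the same in any work on guarantees for LASSO and its variants'' and pointing to \cite{MR3015038}, \cite{MR3357870}, and \cite{wong2016regularized} --- exactly the references whose lemmas (Lemma~F.2 of \cite{MR3357870} for discretization, Lemma~12 of \cite{MR3015038} for the sparse-to-general conversion) you invoke. Your identification of the two clauses of \eqref{eq:tau-reqs} with $128k\tau_\low\le\alpha_\low$ and $b_0\sqrt{k}\,\tau_\low\le\varphi$, and the way you use them at the two separate stages of the cone argument, is the intended mechanism.
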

We omit the proof of Theorem~\ref{thm:master-multivar} for brevity. 

Choosing an appropriate $\s$ in establishing \ref{condn:concentration} might require a lot of algebraic manipulations. Hence, we mention our choice of $\s$ in the proof of Theorem~\ref{thm:main} using Theorem~\ref{thm:master-multivar}:
\begin{align*}
	\s = \frac{(1-\delta)^2}{\kappa_\epsilon \kappa_0} \frac{4hk }{ 1+4k\theta_0} \sqrt{\frac{\T}{\log p}} \,,
\end{align*}
where all notations have been defined in Section~\ref{sec:main-res}.

\section{Proof of Lemma~\ref{lem:op-bounds}} 
Using the submultiplicativity of operator norms, we have 
\begin{align*}
\matnorm{\Psi_{(1)}}_2 
&\leq  \matnorm{(I_n \otimes v)' I_\Omega \Psi_\T(B) }_2 \nonumber\\
&\leq  \matnorm{I_n \otimes v} \matnorm{I_\Omega}_2 \matnorm{\Psi_\T(B) }_2 \nonumber\\
&\leq  \norm{v}_2 \matnorm{\Psi_\T(B) }_2
\end{align*}
as well as
\begin{align*}
\matnorm{\Psi_{(2)}}_2
&= \matnorm{(I_\T \otimes \op{diag}(v) ) I_\Omega \Psi_\T(B)   }_2 \\
&= \matnorm{I_\Omega(I_\T \otimes \op{diag}(v) ) \Psi_\T(B)   }_2 \\
&\leq \matnorm{(I_\T \otimes \op{diag}(v) ) \Psi_\T(B)   }_2. 
\end{align*}
in which we used the commutativity of diagonal matrices. Notice that we do not bound the last term with $\matnorm{\Psi_\T(B) }_2$ as a possibly tighter bound is possible. 
It is easy to see that Lemma~\ref{lem:v-IJ-ineq} implies 
\begin{align*}
\matnorm{\Psi_{(2)}}_2 \leq 
\matnorm{\Psi_\T(B)}_{1\to2}
\end{align*}
Therefore, it remains to upper bound $\matnorm{\Psi_\T(B) }_2$ and $\matnorm{\Psi_\T(B)}_{1\to2}$. 
However, a closer look reveals that these two quantities are {\em input-output gains}, in specific norms, of the following discrete-time linear time-invariant system, 
\begin{alignat*}{4}
	x_{t+1} &= B x_t + u_t 
	~,~~~ &t=0,1,\ldots \,.
\end{alignat*}
Since we have assumed $\rho(B)<1$, this system is stable. Moreover, the transfer matrix from $u$ to $x$ is given by 
\[
G(z) = (zI-B)^{-1}
\]
where $z$ is a complex number. Therefore, we get the right-most set of inequalities in Lemma~\ref{lem:op-bounds}.

\begin{lemma}\label{lem:v-IJ-ineq}
Given a matrix $A$, for any $v$ we have
\[
\matnorm{\op{diag}(v) A}_2 
~\leq~  \norm{v}_2  \cdot \matnorm{I_{\op{supp}(v)}A}_{1\to2}
\]	
where $\matnorm{\cdot}_{1\to 2}$ denotes the largest $\ell_2$ norm of columns. 
\end{lemma}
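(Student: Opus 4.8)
The plan is to reduce the claim to a column-wise estimate. Write $n$ for the ambient dimension so that $v\in\mathbb{R}^n$, set $S\coloneqq\op{supp}(v)$, and fix an arbitrary unit vector $x$. The quantity $\matnorm{\op{diag}(v)A}_2$ is the supremum of $\norm{\op{diag}(v)Ax}_2$ over such $x$, so it suffices to bound $\norm{\op{diag}(v)Ax}_2$ for each fixed $x$. First I would expand $Ax=\sum_j x_j a_j$ where $a_j$ denotes the $j$-th column of $A$, and observe that $\op{diag}(v)$ annihilates every coordinate outside $S$; hence $\op{diag}(v)A = \op{diag}(v)\,I_S A$, and only the columns of $I_S A$ matter. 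Writing $c_j$ for the $j$-th column of $I_S A$ (i.e.\ $a_j$ with the off-support rows zeroed out), we have $\op{diag}(v)Ax = \sum_j x_j\,\op{diag}(v)c_j$.

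Next I would apply the triangle inequality and then Cauchy--Schwarz on the sum over $j$:
\[
\norm{\op{diag}(v)Ax}_2 \;\leq\; \sum_j \abs{x_j}\,\norm{\op{diag}(v)c_j}_2
\;\leq\; \Big(\sum_j x_j^2\Big)^{1/2}\Big(\sum_j \norm{\op{diag}(v)c_j}_2^2\Big)^{1/2}.
\]
The first factor is $\norm{x}_2 = 1$. For the second factor, I would bound each term crudely by $\norm{\op{diag}(v)c_j}_2 \leq \norm{v}_\infty \norm{c_j}_2$ — actually, to get the clean constant $\norm{v}_2$ in the statement, a slightly more careful route is needed, so instead I would not pull out $\norm{v}_\infty$ but rather keep $\norm{\op{diag}(v)c_j}_2 \leq \norm{v}_2 \, \norm{c_j}_2$ when $\norm{c_j}_0\le 1$... let me reconsider. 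The supports of the $c_j$ all lie in $S$, and $\norm{\op{diag}(v)c_j}_2^2 = \sum_{i\in S} v_i^2 (c_j)_i^2 \le \big(\max_{i\in S}(c_j)_i^2\big)\sum_{i\in S}v_i^2 \le \norm{v}_2^2\,\norm{c_j}_\infty^2 \le \norm{v}_2^2\,\norm{c_j}_2^2$; hence $\norm{\op{diag}(v)c_j}_2 \le \norm{v}_2\,\norm{c_j}_2 = \norm{v}_2\,\norm{I_S a_j}_2$. Summing, $\sum_j \norm{\op{diag}(v)c_j}_2^2 \le \norm{v}_2^2 \max_j \norm{I_S a_j}_2^2 \cdot$... no — that over-counts unless I keep a single maximizing column. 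The right move is to go back one step: $\norm{\op{diag}(v)Ax}_2 \le \norm{v}_2\,\norm{(I_SA)x}_{\text{?}}$. Cleaner: since $\op{diag}(v)$ acts on coordinates in $S$ and $\norm{\op{diag}(v)y}_2\le\norm{v}_2\,\norm{y}_\infty$ for any $y$ supported on $S$ (as shown above with $y=I_SAx$), we get $\norm{\op{diag}(v)Ax}_2 \le \norm{v}_2\,\norm{I_SAx}_\infty \le \norm{v}_2 \max_i \abs{\langle (I_SA)_{i\cdot}, x\rangle}$, which is not quite $\matnorm{I_SA}_{1\to2}$ either.

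I think the cleanest correct argument is: $\norm{\op{diag}(v)Ax}_2^2 = \sum_{i\in S} v_i^2 \langle a_i^{\mathrm{row}}, x\rangle^2$ where $a_i^{\mathrm{row}}$ is the $i$-th row of $A$; bound $\langle a_i^{\mathrm{row}},x\rangle^2 \le \norm{a_i^{\mathrm{row}}}_2^2\norm{x}_2^2$ — no, rows again. The genuinely clean path, and the one I would write up, is the column decomposition with the triangle inequality stopped early: $\norm{\op{diag}(v)Ax}_2 \le \sum_j\abs{x_j}\norm{\op{diag}(v)(I_Sa_j)}_2$, then note $\norm{\op{diag}(v)(I_Sa_j)}_2 \le \norm{v}_2\norm{I_Sa_j}_\infty \le \norm{v}_2\norm{I_Sa_j}_2 \le \norm{v}_2\matnorm{I_SA}_{1\to2}$, and finally $\sum_j\abs{x_j}\le\sqrt{n}$ is too lossy — so instead use Cauchy--Schwarz against $\norm{x}_2=1$ only after recognizing $\sum_j x_j^2 = 1$ and that the per-column bound $\norm{v}_2\matnorm{I_SA}_{1\to2}$ is uniform, giving $\sqrt{\sum_j x_j^2}\cdot\sqrt{\sum_j(\norm{v}_2\matnorm{I_SA}_{1\to2})^2}$, still carrying an unwanted $\sqrt{n}$.

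The resolution — and the main obstacle, which is really just bookkeeping — is to apply Cauchy--Schwarz in the form $\norm{\op{diag}(v)Ax}_2 = \norm{\sum_j x_j \op{diag}(v) I_S a_j}_2$ and bound this by treating $\op{diag}(v)$ more cleverly: write $\op{diag}(v)I_Sa_j = \sum_{i\in S}(a_j)_i v_i e_i$, so $\op{diag}(v)Ax = \sum_{i\in S}\big(\sum_j (a_j)_i x_j\big) v_i e_i = \sum_{i\in S}(Ax)_i v_i e_i$, whence $\norm{\op{diag}(v)Ax}_2^2 = \sum_{i\in S}v_i^2(Ax)_i^2 \le \norm{x}_2^2\sum_{i\in S}v_i^2\norm{A^\top e_i}_2^2$ — that introduces $\matnorm{\cdot}_{2\to\infty}$ of $I_SA$, i.e.\ the largest \emph{row} $\ell_2$ norm, not the column one. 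So evidently the intended bound uses $\matnorm{I_SA}_{1\to2}$ = largest column $\ell_2$ norm via the dual route: $\norm{\op{diag}(v)Ax}_2 = \sup_{\norm{y}_2=1}\langle y,\op{diag}(v)Ax\rangle = \sup_{\norm{y}_2=1}\langle A^\top\op{diag}(v)y, x\rangle \le \sup_{\norm{y}_2=1}\norm{A^\top\op{diag}(v)y}_\infty$ (since $\norm{x}_2=1\Rightarrow$ pair with $\ell_1/\ell_\infty$... no). I would therefore, in the final writeup, simply invoke $\matnorm{\op{diag}(v)A}_2 \le \matnorm{\op{diag}(v)}_{2\to\infty}^{\mathrm{dual}}\cdots$ — concretely: $\matnorm{\op{diag}(v)A}_2 \le \matnorm{\op{diag}(v)}_{2\to 2}$ is false in spirit but $\matnorm{\op{diag}(v) A}_2 = \matnorm{A^\top\op{diag}(v)}_2 \le \matnorm{A^\top \op{diag}(v)}_{2\to\infty}\cdot\|\text{rank factor}\|$. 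I expect the actual proof to use: $\matnorm{\op{diag}(v)A}_2 = \matnorm{\op{diag}(v)I_SA}_2$, then bound the spectral norm of a product by $\norm{v}_2$ times the $1\to2$ norm of $I_SA$ via $\matnorm{MN}_2\le\matnorm{M}_{2\to F}\matnorm{N}_{1\to2}$-type inequalities; I would look up the precise elementary identity $\matnorm{CD}_2 \le \matnorm{C}_{2\to\infty}^{}\,\sqrt{\text{nnz}}$ — the cleanest is $\matnorm{\op{diag}(v)I_SA}_2 \le \big(\sum_{i\in S}v_i^2\big)^{1/2}\max_i\|(I_SA)_{\cdot,\text{col}}\|$, which is exactly $\norm{v}_2\matnorm{I_SA}_{1\to2}$ once one writes $\op{diag}(v)I_SA$ row-by-row with row $i$ equal to $v_i$ times row $i$ of $A$ and applies $\matnorm{\cdot}_2\le\matnorm{\cdot}_F$ together with a column-grouping of the Frobenius norm. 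That Frobenius-norm column-grouping step is the one place requiring care, and is the step I flag as the crux; everything else is the triangle inequality and $\norm{\cdot}_\infty\le\norm{\cdot}_2$.
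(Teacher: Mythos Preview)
Your repeated failure to close the argument is not a deficiency of technique: the inequality as stated is false. Take $A = e_1\mathbf{1}_n'$ (the $n\times n$ matrix whose first row is all ones and whose other rows are zero) and $v = e_1$. Then $\op{diag}(v)A = A$ with $\matnorm{A}_2 = \sqrt{n}$, while $\norm{v}_2 = 1$, $I_{\op{supp}(v)}A = A$, and every column of $A$ equals $e_1$, so $\matnorm{I_{\op{supp}(v)}A}_{1\to 2} = 1$. The claimed bound would read $\sqrt{n} \le 1$.

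The paper's own proof normalizes to $\norm{v}_2=1$, writes $\matnorm{\op{diag}(v)A}_2 = \sup_{\norm{u}_2=1}\norm{(u\odot v)' I_S A}_2$, and then asserts $\norm{(u\odot v)' I_S A}_2 \le \matnorm{I_S A}_{1\to 2}\,\norm{u\odot v}_1$, finishing with $\norm{u\odot v}_1 \le \norm{u}_2\norm{v}_2 = 1$ by Cauchy--Schwarz. The last step is fine; the middle step is the slip. For any vector $w$ one has $\norm{w'M}_2 = \norm{M'w}_2 \le \matnorm{M'}_{1\to 2}\norm{w}_1 = \matnorm{M}_{2\to\infty}\norm{w}_1$, so the correct factor is the maximal \emph{row} $\ell_2$-norm of $I_SA$, not the maximal column norm. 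That is exactly the $\matnorm{I_SA}_{2\to\infty}$ you kept arriving at when you expanded $\norm{\op{diag}(v)Ax}_2^2 = \sum_{i\in S} v_i^2 (Ax)_i^2$ and bounded each $(Ax)_i$ by a row norm. Your final ``Frobenius with column-grouping'' idea cannot rescue the stated bound either: grouping $\norm{\op{diag}(v)I_SA}_F^2$ by columns yields $\norm{v}_2^2\sum_j\norm{(I_SA)e_j}_2^2$, a \emph{sum} over columns rather than a maximum, so an extra dimension factor is unavoidable. In short, the paper dualizes on the left (over $u$) and you mostly work from the right (over $x$); both routes lead naturally to $\matnorm{I_SA}_{2\to\infty}$, and the $\matnorm{I_SA}_{1\to2}$ version cannot hold in general.
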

\begin{proof}[Proof of Lemma~\ref{lem:v-IJ-ineq}]
For $v\in \mathbb{R}^p$ with $\norm{v}_2=1$, observe that 	
\begin{align*}
\matnorm{\op{diag}(v) A)}_2
&= \sup_{\norm{u}_2 = 1} \norm{u' \op{diag}(v) A}_2\\
&= \sup_{\norm{u}_2 = 1} \norm{ (u \odot v)' I_{\op{supp}(v)}A}_2 \\
& \leq \matnorm{I_{\op{supp}(v)}A}_{1\to2} \sup_{\norm{u}_2 = 1} \norm{ u \odot v}_1  \,.
\end{align*}
Then, $\norm{u \odot v}_1 = \sum_{i=1}^p \abs{u_iv_i} = \abs{u}' \abs{v}\leq \norm{\abs{u}}_2 \cdot \norm{\abs{v}}_2= 1$ establishes the claim.
\end{proof}

\end{document}